\newcommand{\norm}[1]{\left\lVert#1\right\rVert}
\newcommand{\Dcal}[0]{\mathcal{D}}
\newcommand{\Lcal}[0]{\mathcal{L}}
\newcommand{\Ocal}[0]{\mathcal{O}}
\newcommand{\expect}{\operatorname{\mathbb{E}}}
\newcommand{\Var}{\operatorname{Var}}
\newcommand{\VAR}[1]{\Var\left[#1\right]}
\newcommand{\VARR}[2]{\Var_{#1}\left[#2\right]}
\newcommand{\E}[1]{\expect\left[#1\right]}
\newcommand{\EE}[2]{\expect_{#1}\left[#2\right]}
\newtheorem{assumption}{Assumption}
\newtheorem{theorem}{Theorem}
\newtheorem{lemma}[theorem]{Lemma}
\newtheorem{proposition}{Proposition}
\pgfplotsset{compat=1.15}
\DeclarePairedDelimiter\floor{\lfloor}{\rfloor}
\icmltitlerunning{Clustered Sampling:  Low-Variance and Improved Representativity for Clients Selection in Federated Learning}
\begin{document}

\twocolumn[
%\icmltitle{Clustered Sampling: \\Reducing the Variance of Clients Selection in Federated Learning}

\icmltitle{Clustered Sampling:  Low-Variance and Improved Representativity \\for Clients Selection in Federated Learning}

% It is OKAY to include author information, even for blind
% submissions: the style file will automatically remove it for you
% unless you've provided the [accepted] option to the icml2020
% package.

% List of affiliations: The first argument should be a (short)
% identifier you will use later to specify author affiliations
% Academic affiliations should list Department, University, City, Region, Country
% Industry affiliations should list Company, City, Region, Country

% You can specify symbols, otherwise they are numbered in order.
% Ideally, you should not use this facility. Affiliations will be numbered
% in order of appearance and this is the preferred way.
% \icmlsetsymbol{equal}{*}

\begin{icmlauthorlist}
\icmlauthor{Yann Fraboni}{to,goo}
\icmlauthor{Richard Vidal}{goo}
\icmlauthor{Laetitia Kameni}{goo}
\icmlauthor{Marco Lorenzi}{to}
\end{icmlauthorlist}

\icmlaffiliation{to}{Universit\'e C\^{o}te d’Azur, Inria, Epione Research Group, France}
\icmlaffiliation{goo}{Accenture Labs, Sophia Antipolis, France}

\icmlcorrespondingauthor{Yann Fraboni}{yann.fraboni@inria.fr}
% \icmlcorrespondingauthor{Eee Pppp}{ep@eden.co.uk}

% You may provide any keywords that you
% find helpful for describing your paper; these are used to populate
% the "keywords" metadata in the PDF but will not be shown in the document
\icmlkeywords{Machine Learning, ICML, Federated Learning, Clustered Sampling, client sampling, MD sampling, client sampling, Multonomial, Distribution, unbiased, variance, reduction, representativity, clients selection, FL}

\vskip 0.3in
]

% this must go after the closing bracket ] following \twocolumn[ ...

% This command actually creates the footnote in the first column
% listing the affiliations and the copyright notice.
% The command takes one argument, which is text to display at the start of the footnote.
% The \icmlEqualContribution command is standard text for equal contribution.
% Remove it (just {}) if you do not need this facility.

\printAffiliationsAndNotice{}  % leave blank if no need to mention equal contribution
% \printAffiliationsAndNotice{\icmlEqualContribution} % otherwise use the standard text.

\begin{abstract}

This work addresses the problem of optimizing communications between server and clients in federated learning (FL). Current sampling approaches in FL are either biased, or non optimal in terms of server-clients communications and training stability. To overcome this issue, we introduce \textit{clustered sampling} for clients selection. We prove that clustered sampling leads to better clients representatitivity and to reduced variance of the clients stochastic aggregation weights in FL. Compatibly with our theory, we provide two different clustering approaches enabling clients aggregation based on 1) sample size, and 2) models similarity. Through a series of experiments in non-iid and unbalanced scenarios, we demonstrate that model aggregation through clustered sampling consistently leads to better training convergence and variability when compared to standard sampling approaches. Our approach does not require any additional operation on the clients side, and can be seamlessly integrated in standard FL implementations. Finally, clustered sampling is compatible with existing methods and technologies for privacy enhancement, and for communication reduction through model compression.

\end{abstract}

\section{Introduction}

Federated learning (FL) is a training paradigm enabling different clients to jointly learn a global model without sharing their respective data. 
%It is particularly suited for Machine Learning applications in domains where data security is critical, such as healthcare \cite{Brisimi, silva2019federated}. The relevance of this approach is witnessed by current large scale federated learning initiatives on the way in the medical domain, for instance for learning predictive models of breast cancer\footnote{\url{blogs.nvidia.com/blog/2020/04/15/federated-learning-mammogram-assessment/}}, or for drug discovery and development\footnote{\url{www.imi.europa.eu/projects-results/project-factsheets/melloddy}}.
Communication can be a primary bottleneck for FL since wireless and other end-user internet connections operate at variable communication rates while being potentially unreliable. 
Moreover, the capacity of the aggregating server may impose constraints on the number of clients the server can communicate with at the same time.
%Moreover, the capacity of the aggregating server and other FL system considerations impose direct or indirect constrains on the number of clients that are allowed to participate in each communication round. 
These considerations led to significant interest in reducing the number and bandwidth of communications at every step of the FL process.

One of the most popular communication reduction strategies consists in limiting the frequency of communications at the expense of increased computation on the clients side. This is usually achieved by asking the clients to perform multiple iterations of local gradient descent before communicating their updates. In this setting, FedAvg \cite{FedAvg} is the first and most widely used FL algorithm, for which convergence bounds were given in \cite{FedNova, ontheconvergence, SCAFFOLD, Yu_Yang_Zhu_2019, pmlr-v108-bayoumi20a, pmlr-v119-woodworth20a,  Lin2020Don't, stich2018local}.

% On top of FedAvg, clients and server can reduce the size of the object they communicate. These techniques are usually referred as communication compression. In this setting, instead of transmitting the full-dimensional update vector, one transmits a compressed representation of clients parameters \cite{Alistarh2017, Reisizadeh2019, LiZhize2020} .

To further reduce the number of communications, the server can select a subset of clients participating at every iteration. This strategy, called client sampling, enables reducing communications to the minimum. FedAvg first proposed selecting $m$ clients uniformly without replacement while replacing the contribution of the non-sampled clients with the current global model. However this scheme is known for being biased, since the resulting model is, in expectation, different from the deterministic aggregation of every client. To overcome this issue, \cite{FedProx} proposes an \textit{unbiased} sampling scheme where the new global model is created as the average of the sampled clients work. The sampling is based on a multinomial distribution (MD) whose clients probabilities corresponds to their relative sample size. While other clients sampling schemes have been proposed, most of them require additional server-clients communications and are not proven to be unbiased \cite{ontheconvergence, Richtarik_optimal_sampling, sampling_mobile_edge}.

% In this work, we introduce \textit{clustered sampling}, selecting clients while keeping to a minimum server-clients communications. 
To the best of our knowledge, FedAvg and MD sampling are the only schemes keeping to a minimum server-clients communications.
In particular, MD sampling has been proven to lead to FL optimum and shown experimentally to outperform FedAvg sampling \cite{ontheconvergence}. In spite of its unbiasedness, MD sampling may still lead to large variance in the clients selection procedure. 
While unbiasedness guarantees proper clients representativity in expectation, representativity is not necessarily achieved when considering a single FL iteration. Since at each MD sampling instance we select clients with replacement, this determines a variance in the amount of times a client is selected. This sampling variance is a leading cause of the large variability in the convergence of FL, especially in non-iid applications.
Indeed, at each iteration, sampled clients improve the global model based on their data distribution, to the detriment of the data specificity of non-sampled clients. 

While the literature mainly focused on the study of the behavior of FL sampling strategies in expectation, to our knowledge this study provides the first theoretical investigation of the variability properties of FL sampling. In what follows, we show that this statistical aspect is crucial to determine convergence stability and quality of FL. 
The contribution of this work is the introduction of \textit{clustered sampling}, a new unbiased client sampling scheme improving MD sampling by guaranteeing smaller client selection variability, while keeping to a minimum server-clients communications. By increasing every client representativity in model aggregations, clustered sampling ensures that clients with unique distributions are more likely of being sampled, leading to smoother and faster FL convergence.

We first derive, in Section \ref{sec:related}, the theory behind current FL sampling schemes our work is built on. We then formally introduce clustered sampling in Section \ref{sec:clustered_sampling} and prove its theoretical correctness by extending the work done in \cite{FedNova}. We finally show the theoretical benefits of clustered sampling over MD sampling. 
In Section \ref{sec:basic_clustered}, we propose an implementation of clustered sampling aggregating clients based on their sample size, showing that this approach leads to reduced variance of the clients' aggregation weights.
In Section \ref{sec:clustered_improved}, we extend our sampling theory to aggregation schemes based on the similarity between clients updates, showing that this approach further reduces the variance of clients aggregation weights while improving the representation of the clients during each FL aggregation step, as compared to MD sampling. This result leads to an overall improvement of the convergence of FL.
Finally, in Section \ref{sec:experiments}, we experimentally demonstrate our work on a broad range of balanced and unbalanced heterogeneous dataset. The code used for this work is available \href{https://github.com/Accenture/Labs-Federated-Learning/tree/clustered_sampling}{here}\footnote{https://github.com/Accenture//Labs-Federated-Learning/tree/clustered\_sampling}.

\section{Related Work}
\label{sec:related}

Before introducing in Section \ref{sec:clustered_sampling} the core idea of clustered sampling, we first recapitulate in Section \ref{sec:related} the current theory behind parameter aggregation and sampling schemes for FL.

\subsection{Aggregating clients local updates}

% In FL, we consider a set $I$ of clients respectively owning  datasets $\Dcal_i$ composed of $n_i$ samples. FL aims at optimizing the average of each clients local loss function weighted by their number of samples $n_i$, i.e.
% \begin{equation}
% \label{eq:global_loss}
% 	\Lcal(\theta) = \sum_{i\in I}\frac{n_i}{M}\Lcal_i(\theta),
% \end{equation}
% where $\theta$ represents the model parameters and $M=\sum_{i\in I}n_i$ the total number of samples across distributed datasets. 

In FL, we consider a set $I$ of clients respectively owning  datasets $\Dcal_i$ composed of $n_i$ samples. FL aims at optimizing the average of each clients local loss function weighted by their importance $p_i$
\begin{equation}
\label{eq:global_loss}
	\Lcal(\theta) = \sum_{i\in I}p_i\Lcal_i(\theta),
\end{equation}
where $\theta$ represents the model parameters and $\sum_{i=1}^n p_i  = 1$. While any combination of $\{p_i \}$ is possible, a common choice consists in defining $p_i = n_i /M$, where $M = \sum_{i \in I} n_i$ is the total number of sample across datasets. In this work, we adopt the same definition of the improtance weights, althought the theory derived below does not depend on ny specific choice of the parameters $\{p_i\}$.

In this setting, to estimate a global model across clients, FedAvg \cite{FedAvg} is an iterative training strategy based on the aggregation of local model  parameters ${\theta}_i^t$. At each iteration step $t$, the server sends the current global model parameters $\theta^t$ to the clients. Each client updates the model by minimizing the local cost function $\Lcal({\theta}_i^{t+1},\Dcal_i)$ through a fixed amount of SGD initialized with $\theta^t$. Subsequently each client returns the updated local parameters ${\theta}_i^{t+1}$ to the server. The global model parameters $\theta^{t+1}$ at the iteration step $t+1$ are then estimated as a weighted average, i.e.
\begin{equation}
\label{FedAvg_server_aggregation}
\theta^{t+1}=\sum_{i\in I}\frac{n_i}{M}{\theta}_i^{t+1}.
\end{equation}
%FedProx \cite{FedProx} builds upon FedAvg by adding to the cost function a L2 regularization term penalizing the deviation of the local parameters ${\theta}_i^{t+1}$ from the global parameters $\theta^t$. The new cost function is \smash{$\Lcal_{Prox}({\theta}_i^{t+1},\Dcal_i,\theta^t)=\Lcal({\theta}_i^{t+1},\Dcal_i)+\frac{\mu}{2}\norm{{\theta_i}^{t+1}-\theta^t}^2$} where $\mu$ is the hyperparameter monitoring the regularization by enforcing proximity between local update ${\theta_i}^{t+1}$ and reference model $\theta^t$.

\subsection{Clients' sampling}

Clients sampling is a central operation of FL. FedAvg \cite{FedAvg} proposes to uniformly sample a subset of participating clients $S_t$ at every iteration while the other clients updates are replaced by the current global model, i.e.
\begin{equation}
\label{FedAvg_server_aggregation_sampling}
\theta^{t+1}
= \sum_{i\in S_t}\frac{n_i}{M}{\theta}_i^{t+1} + \sum_{i\notin S_t}\frac{n_i}{M}{\theta}^{t}.
\end{equation}

The sampling scheme introduced by FedAvg is generally slow due to the attrition introduced by non-participating clients. To solve this problem, \cite{FedProx} proposes instead to sample $S_t$, the subset of clients at iteration $t$, from a Multinomial Distribution (MD) where each client is sampled according to its relative data ratio $p_i = \frac{n_i}{M}$. The new global model is obtained as the average of each selected client, i.e.
\begin{equation}
\label{FedProx_server_aggregation_sampling}
\theta^{t+1}
= \sum_{i\in S_t}\frac{1}{m}{\theta}_i^{t+1} .
\end{equation}
By design, MD sampling is such that the aggregation of clients model updates is identical in expectation to the one obtained when considering all the clients, i.e. $\EE{S_t}{\theta^{t+1}}=\sum_{i\in I}p_i\theta_i^{t+1}$. Sampling schemes following this property are called \textit{unbiased}. Notably, the sampling scheme employed by FedAvg does not satisfy this property, and it is thus prone to clients-drift \cite{SCAFFOLD}.

\subsection{FL convergence with MD client sampling}

Theoretical guarantees regarding the convergence of FedAvg were given in \cite{FedNova}. The proof relies on assumptions classically used in Stochastic Gradient Descent (SGD) analysis \cite{Bottou2018} (Assumptions \ref{ass:smoothness} and \ref{ass:unbiased} below), or commonly used in the federated optimization literature \cite{FedProx, ontheconvergence, Haddadpour2019, SCAFFOLD, MATCHA} to capture the dissimilarities of local objectives (Assumption \ref{ass:dissimilarity} below). 

%\cite{FedNova} also proves that to avoid inducing bias in the trained model, when sampled every client needs to perform the same amount of Stochastic Gradient Descent (SGD) $N$ with an SGD optimizer. Clients, when participating at an iteration, performs therefore $N$ SGD with and SGD optimizer.

%The authors assume the following 3 assumptions regarding each client's local loss function $\Lcal_i$. Using Assumptions \ref{ass:smoothness} to \ref{ass:dissimilarity}, \cite{FedNova} gives an upper bound on the gradient of the global loss function $\Lcal$ at the end of the FL process when considering all the clients and when sampling $m$ clients using FedProx sampling scheme. The theorem for the later result is given in Theorem \ref{theo:convergence_FP}.

\begin{assumption}[Smoothness]\label{ass:smoothness}
	The clients local objective function is Lipschitz smooth, that is, $\norm{\nabla \Lcal_i(x) - \nabla \Lcal_i(y)} \le L \norm{x - y}, \forall i \in \{1, ..., n\}$.
\end{assumption}

\begin{assumption}[Unbiased Gradient and Bounded Variance]\label{ass:unbiased}
	For each client $i$ local model, the stochastic gradient $g_i(x|\xi)$ of model $x$ evaluated on batch $\xi$ is an unbiased estimator of the local gradient: $\EE{\xi}{g_i(x|\xi)} = \nabla \Lcal_i(x)$, and has bounded variance $\EE{\xi}{\norm{g_i(x|\xi) - \nabla \Lcal_i(x)}} \le \sigma^2, \forall i \in \{1,..., n\}$ with $\sigma^2\ge 0$.
\end{assumption}

\begin{assumption}[Bounded Dissimilarity]\label{ass:dissimilarity}
	For any set of weights $\{ w_i\ge 0\}_{i=1}^n$ such that $\sum_{i=1}^{n}w_i =1$, there exists constants $\beta^2\ge 1$ and $\kappa^2 \ge 0$ such that $\sum_{i=1}^{n}w_i \norm{\nabla \Lcal_i(x)}^2 \le \beta^2 \norm{\sum_{i=1}^{n}w_i \nabla \Lcal_i(x)}^2 + \kappa^2$. If all the local loss functions are identical, then we have $\beta^2 =1$ and $\kappa^2 =0$. 
\end{assumption}

The following theorem was proven in \cite{FedNova} and provides theoretical guarantees for MD client sampling.

\begin{theorem}\label{theo:convergence_FP}
	Under Assumptions \ref{ass:smoothness} to \ref{ass:dissimilarity}, and local learning rate $\eta = \sqrt{m/NT}$, 
	FL with $FedAvg$ when sampling $m$ clients with MD
	converges to a stationary point of $\Lcal(\theta)$:
	\begin{equation}
		\frac{1}{T}\sum_{t=0}^{T}\E{\norm{\nabla \Lcal(\theta^t)}^2}
		\le \Ocal(\frac{1}{\sqrt{mNT}}) + \Ocal(\frac{mN}{T}).
	\end{equation}
\end{theorem}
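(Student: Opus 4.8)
The plan is to follow the standard non-convex analysis for FedAvg-style local-SGD algorithms, organized around a one-round descent inequality that is then telescoped across the $T$ communication rounds. First I would invoke the $L$-smoothness of the global objective (Assumption~\ref{ass:smoothness}, which transfers from each $\Lcal_i$ to $\Lcal$ because $\Lcal$ is a convex combination of the $\Lcal_i$) to obtain the descent lemma
\begin{equation}
\E{\Lcal(\theta^{t+1})} \le \E{\Lcal(\theta^t)} + \E{\langle \nabla \Lcal(\theta^t),\, \theta^{t+1}-\theta^t\rangle} + \frac{L}{2}\,\E{\norm{\theta^{t+1}-\theta^t}^2}.
\end{equation}
The entire argument then reduces to controlling the inner-product term and the squared-update term on the \rhs.

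Next I would take expectations over \emph{both} sources of randomness — the minibatch noise $\xi$ and the MD client selection $S_t$ — and exploit unbiasedness. Since MD sampling is unbiased (as recalled earlier, $\EE{S_t}{\theta^{t+1}}=\sum_{i\in I}p_i\theta_i^{t+1}$) and the stochastic gradients are unbiased (Assumption~\ref{ass:unbiased}), the expected aggregated direction coincides, up to the discrepancy between the local iterates and $\theta^t$, with the full-participation direction; the inner-product term therefore contributes the negative, progress-making quantity proportional to $-\eta N\,\norm{\nabla\Lcal(\theta^t)}^2$ plus a client-drift correction. The two error sources I must then bound are (i) the second moment $\E{\norm{\theta^{t+1}-\theta^t}^2}$, into which both the sampling-with-replacement variance of MD and the gradient-variance bound $\sigma^2$ of Assumption~\ref{ass:unbiased} enter, and (ii) the \emph{client drift}, i.e.\ how far each local iterate $\theta_i^{t+1}$ wanders from $\theta^t$ over the local SGD steps.

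To control the drift I would unroll the local SGD steps, bound the per-step displacement using smoothness and the bounded-variance assumption, and then invoke the bounded-dissimilarity assumption (Assumption~\ref{ass:dissimilarity}) to replace any weighted sum of local gradient norms $\sum_i w_i\norm{\nabla\Lcal_i(x)}^2$ by $\beta^2\norm{\sum_i w_i\nabla\Lcal_i(x)}^2+\kappa^2$. This is the step that lets the drift feed back into the $\norm{\nabla\Lcal(\theta^t)}^2$ term (through $\beta^2$) while leaving only an additive floor (through $\kappa^2$). Assembling these estimates yields a recursion of the form $\E{\Lcal(\theta^{t+1})}\le \E{\Lcal(\theta^t)} - c_1\,\eta N\,\E{\norm{\nabla\Lcal(\theta^t)}^2} + c_2$, valid provided $\eta$ is small enough that the negative descent term dominates the positive drift- and smoothness-induced contributions.

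Finally I would telescope this recursion from $t=0$ to $T$, use that $\Lcal$ is bounded below by some $\Lcal^\star$, divide by $T$, and rearrange to isolate $\frac{1}{T}\sum_{t=0}^{T}\E{\norm{\nabla\Lcal(\theta^t)}^2}$. Substituting the prescribed rate $\eta=\sqrt{m/NT}$ then balances the leading optimization term, which scales like $1/(\eta N T)=1/\sqrt{mNT}$, against the residual variance and drift terms, which collect into the $\Ocal(mN/T)$ contribution, giving exactly the claimed $\Ocal(1/\sqrt{mNT})+\Ocal(mN/T)$ bound. The step I expect to be the main obstacle is the simultaneous, clean handling of the three coupled noise channels: the MD sampling variance (from selecting $m$ clients with replacement) must be disentangled from the SGD minibatch variance and from the deterministic client drift, and making the sampling variance scale correctly in $m$ — so that selecting more clients provably reduces it and drives the $1/\sqrt{mNT}$ leading term — is the delicate part of the calculation.
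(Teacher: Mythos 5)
Your plan is a correct outline of the standard non-convex local-SGD analysis, and it is essentially the analysis that underlies the cited result; but it is not the route the paper itself takes. The paper does not re-derive the descent recursion at all: it defers the full convergence argument to \cite{FedNova} and isolates the only sampling-dependent ingredient into Lemma \ref{lem:sampling_conditions}, namely (i) unbiasedness of the aggregated direction, $\EE{S}{\frac{1}{m}\sum_j z_{l_j}} = \sum_i p_i z_i$, and (ii) a second-moment bound obtained by a three-term Jensen split of $\frac{1}{m}\sum_j z_{l_j}$ into the deviation from local gradients, the deviation of sampled gradients from $\nabla\Lcal(x)$, and $\nabla\Lcal(x)$ itself, with the middle term collapsing to $\frac{1}{m}[\beta^2\norm{\nabla\Lcal(x)}^2+\kappa^2]$ because the $m$ indices are drawn iid (cross terms vanish, giving the $1/m$ factor) and Assumption \ref{ass:dissimilarity} applies. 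What you flag as the ``delicate part'' --- making the MD sampling variance scale correctly in $m$ and disentangling it from minibatch noise and drift --- is precisely what this lemma packages: the drift and minibatch noise live entirely in the first term $\sum_i p_i\norm{z_i-\nabla\Lcal_i(x)}^2$ and are handled once and for all in \cite{FedNova}, independently of the sampling scheme. Your monolithic re-derivation would work, but it buys nothing here and loses the modularity that is the point of the paper's structure: by factoring the proof through Lemma \ref{lem:sampling_conditions}, the authors obtain Theorem \ref{theo:convergence_clustered} for clustered sampling by re-verifying only conditions (i) and (ii), rather than repeating the telescoping argument. Also note that, as written, your proposal describes rather than executes the two genuinely laborious steps (the drift recursion over the $N$ local steps and the smallness condition on $\eta$), so on its own it does not yet constitute a proof --- whereas the paper's position is that these steps need not be redone at all.
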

The proof of Theorem \ref{theo:convergence_FP} can be found in \cite{FedNova} and shows that considering a subset of workers with MD client sampling is enough to ensure convergence of the global model to a local minimum of the federated loss function, equation (\ref{eq:global_loss}). Following the conclusions of that work, to avoid optimizing a surrogate loss function instead of the federated one in equation (\ref{eq:global_loss}), the server asks from every client to compute the same amount of SGD steps $N$. 

%However, clients need to be sampled such that the expected aggregation is equal to the aggregation with all the workers considered. At iteration $t$, we denote by $S_t$ the $m$ sampled workers, by $\theta_i^t$ the updated model of worker $i$ , and by $\omega_i(S_t)$ the weight given to worker $i$ for the aggregation. The expected aggregation must then follow

\subsection{Sampling schemes comparison}

Other client sampling schemes have been proposed. For example, with \cite{ontheconvergence}, the server sends the global model to every client before creating the new global model out of the first $m$ updated models the server receives; with \cite{Richtarik_optimal_sampling}, the server waits for every client to send the norm of their work before selecting the $m$ clients with the most relevant updates; with \cite{sampling_mobile_edge}, clients transmit information about their available computation resources before the server selects $m$ of them in function of their availability. Contrarily to FedAvg and MD sampling, these sampling schemes require additional communications and sometimes even computation from all the clients. On the contrary, FedAvg and MD sampling have the appealing property of maintaining to a minimum the amount of clients-server communications at each iteration. The global model is sent only to the sampled clients, and the amount of local work is set for those clients to only $N$ SGD updates. 
To the best of our knowledge, FedAvg and MD sampling schemes are the only approaches minimizing the effective work and communications asked to the clients.
Moreover, while FedAvg sampling has no theoretical guarantees regarding its convergence, neither regarding the unbiasedness of its global model, MD sampling is shown to converge to a stationary point of the global loss function (\ref{eq:global_loss}) (Theorem \ref{theo:convergence_FP}).  Based on these considerations and given that \cite{ontheconvergence} shows experimentally that MD sampling outperforms FedAvg, in the rest of this work we consider MD sampling as reference sampling technique.

\section{Clustered sampling}\label{sec:clustered_sampling}

In Section \ref{sec:clustered_sampling_introduction}, we first introduce clustered sampling and prove the convergence of FL under this scheme. In Section \ref{sec:clustered_sampling_improvments}, we show the statistical improvements brought by clustered sampling as compared to MD client sampling, in terms of reduced sampling variance, and better clients representativity across the entire FL process.

\subsection{Definition of clustered sampling}\label{sec:clustered_sampling_introduction}

%First, regarding notations, like in Section \ref{sec:related}, we denote by $S_t$ the indices of the $m$ sampled clients at server iteration $t$. Without loss of generality, we can consider that $S_t =\{l_1(t), ..., l_m(t)\}$ where $l_k(t)$ is the index of the client sampled in $k^{th}$ position at iteration $t$. 

Let us consider $n$ clients participating to FL. With MD sampling, $m$ clients are sampled from a multinomial distribution supported on $\{1,..., n\}$ where a client is selected in function of its  data ratio $p_i$. 
\begin{assumption}[Unbiased Sampling]\label{ass:sampling} 
    A client sampling scheme is said unbiased if the expected value of the client aggregation is equal to the global deterministic aggregation obtained when considering all the clients, i.e.
	\begin{equation}
	\EE{S_t}{\theta^t}
	=
	\EE{S_t}{\sum_{j \in S_t}w_j(S_t)\theta_j^t}
	\coloneqq
	\sum_{i =1 }^n p_i\theta_i^t ,
	\end{equation}
	where $w_j(S_t)$ is the aggregation weight of client $j$ for subset of clients $S_t$.
\end{assumption}

In \cite{ontheconvergence}, the notion of unbiased sampling is introduced by means of Assumption \ref{ass:sampling}. MD sampling follows this assumption, and thus provides at every iteration an unbiased global model. However, MD sampling enables a client to be sampled from $0$ to $m$ times with non-null probability at each iteration, giving aggregation weights for every client ranging from 0 to 1. 
As a result, MD sampling provides appropriate representation for every client in expectation, with however potentially large variance in the amount of times a client is selected. As a consequence, the representativity of a client at any given realization of a FL iteration may not guaranteed. In the following we introduce clustered sampling, and show that this strategy leads to decreasing clients aggregation weight variance and better clients representativity.
% 
% Still, as rule of thumb, one would want clients to be sampled between $\floor{mp_i}$ and $\ceil{m p_i}$ times at every iteration. 
% 
% To decrease this variability in number of times a client can be sampled, we introduce clustered sampling.

We denote by $W_0$ the multinomial distribution with support on $\{1, ..., n\}$ used to sample one client according to its data ratio $p_i$. MD sampling can be seen as sampling $m$ times with $W_0$. With clustered sampling, we propose to generalize MD sampling by sampling $m$ clients according to $m$ independent distributions $\{W_k(t)\}_{k=1}^m$ each of them privileging a different subset of clients based on opportune selection criteria (Section \ref{sec:basic_clustered} and \ref{sec:clustered_improved}). With clustered sampling, the $m$ clients can be sampled with different distributions and, at two different iterations, the set of distributions can differ. MD sampling is a special case of clustered sampling when $\forall t,\ \forall k\in\{1, ..., m\},\ W_k(t) = W_0$.

In the rest of this work, we denote by $r_{k, i}^t$ the probability for client $i$ to be sampled in distribution $W_k(t)$. By construction, we have:
\begin{equation}
\forall k \in\{1, ..., m\},\ 
\sum_{i=1}^{n}r_{k, i}^t = 1 \text{ with } r_{k, i}^t \ge 0
\label{eq:sum_proba}
.
\end{equation} 

We also require clustered sampling to be unbiased. Extending Assumption \ref{ass:sampling} to $m$ independent sampling distributions $\{W_k(t)\}_{k=1}^m$, we obtain the property:
\begin{equation}
\label{eq:total_probability}
\forall i \in \{1, ..., n\},\ \sum_{k=1}^{m} r_{k,i}^t = m\ p_i .
\end{equation}

\begin{proposition}\label{prop:equivalent}
Equations (\ref{eq:sum_proba}) and (\ref{eq:total_probability}) are sufficient conditions for clustered sampling to satisfy Assumption \ref{ass:sampling}.
\end{proposition}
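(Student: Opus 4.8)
The plan is to compute $\EE{S_t}{\theta^{t+1}}$ directly under the clustered sampling mechanism and show it collapses to $\sum_{i=1}^n p_i \theta_i^{t+1}$, the \rhs of Assumption \ref{ass:sampling}. First I would fix the aggregation rule consistently with MD sampling in equation (\ref{FedProx_server_aggregation_sampling}): under clustered sampling exactly one client is drawn from each of the $m$ independent distributions $\{W_k(t)\}_{k=1}^m$, and each drawn client enters the average with weight $1/m$. Writing $i_k$ for the (random) client selected from $W_k(t)$, this gives $\theta^{t+1} = \frac{1}{m}\sum_{k=1}^m \theta_{i_k}^{t+1}$, where a client picked by several distributions is counted with the corresponding multiplicity. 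This specializes to MD sampling when every $W_k(t)=W_0$, confirming the setup is the right generalization.

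Next I would take the expectation. By linearity and by the mutual independence of the $m$ sampling distributions, the expectation decomposes term by term as $\EE{S_t}{\theta^{t+1}} = \frac{1}{m}\sum_{k=1}^m \EE{W_k(t)}{\theta_{i_k}^{t+1}}$. Here equation (\ref{eq:sum_proba}) is exactly what guarantees that each $W_k(t)$ is a genuine probability distribution on $\{1,\dots,n\}$, so that the single-draw expectation is well defined and equals $\EE{W_k(t)}{\theta_{i_k}^{t+1}} = \sum_{i=1}^n r_{k,i}^t\,\theta_i^{t+1}$.

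I would then substitute this identity, interchange the two finite sums, and factor out each $\theta_i^{t+1}$:
\begin{equation}
\EE{S_t}{\theta^{t+1}} = \frac{1}{m}\sum_{i=1}^n \left(\sum_{k=1}^m r_{k,i}^t\right)\theta_i^{t+1}.
\end{equation}
The final step applies equation (\ref{eq:total_probability}), which replaces the inner sum by $m\,p_i$; the factor $m$ cancels against the $1/m$ prefactor, leaving precisely $\sum_{i=1}^n p_i\,\theta_i^{t+1}$, as required.

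Since the argument is a short linear computation, there is no genuinely hard step; the only real care lies in correctly formalizing the sampling model — in particular recognizing that the $m$ draws are mutually independent (so that the expectation factorizes and linearity applies cleanly) and that the per-draw weight is $1/m$. The two hypotheses play complementary, clearly separated roles: (\ref{eq:sum_proba}) legitimizes each per-distribution expectation, while (\ref{eq:total_probability}) is precisely the condition ensuring the accumulated selection probability of each client is $m\,p_i$, which recovers the target weight $p_i$ after normalization.
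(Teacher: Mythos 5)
Your proposal is correct and follows essentially the same route as the paper: compute the per-distribution single-draw expectation $\sum_{i=1}^n r_{k,i}^t\theta_i^t$ (justified by equation (\ref{eq:sum_proba})), average over the $m$ distributions by linearity, and invoke equation (\ref{eq:total_probability}) to collapse $\frac{1}{m}\sum_k r_{k,i}^t$ to $p_i$. The only cosmetic difference is that you invoke mutual independence of the draws, which is not actually needed here since linearity of expectation alone suffices.
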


\begin{proof}
Satisfying equation (\ref{eq:sum_proba}) ensures the $m$ distributions used with clustered sampling are feasible. When sampling one client from one of the $m$ distributions $W_k(t)$, we get: 	
\begin{equation}
% 	\EE{S_t}{\theta^t}
% 	=
	\EE{W_k(t)}{\sum_{j \in W_k(t)}w_j(W_k(t))\theta_j^t}
	=
	\sum_{i =1 }^n r_{k, i}^t\theta_i^t .
	\label{eq:averages_distribution}
\end{equation}

By linearity of the expected value, the expected new global model
is the average between the weighted models obtained according to each distribution $\{W_k\}_{k=1}^m$ derived in equation (\ref{eq:averages_distribution}), i.e.
\begin{align}
	\EE{S_t}{\theta^t}
% 	&=
% 	\sum_{k=1}^m\frac{1}{m}\EE{W_k(t)}{\sum_{j \in W_k(t)}w_j(W_k(t))\theta_j^t}\\
	&=
	\sum_{k = 1}^m\frac{1}{m}{\sum_{i =1 }^n r_{k, i}^t\theta_i^t }=	\sum_{i =1 }^n p_i\theta_i^t
	,
\end{align}	
where the second equality comes from equation (\ref{eq:total_probability}).
\end{proof}

In Theorem \ref{theo:convergence_clustered}, we prove that FedAvg with clustered sampling satisfying Assumptions \ref{ass:smoothness} to \ref{ass:dissimilarity} and Proposition \ref{prop:equivalent} has the same convergence bound to a FL local optimum as with FedAvg and MD sampling. The proof of Theorem \ref{theo:convergence_clustered} can be found in Appendix \ref{app:proof_theo_2}.

\begin{theorem}\label{theo:convergence_clustered}
	Under Assumption \ref{ass:smoothness} to \ref{ass:dissimilarity}, and local learning rate $\eta = \sqrt{m/NT}$, 
	let's consider FL with $FedAvg$ when sampling $m$ clients with clustered sampling scheme satisfying Proposition \ref{prop:equivalent}. 	The same asymptotic behavior of MD sampling holds :
	\begin{equation}
	\frac{1}{T}\sum_{t=0}^{T}\E{\norm{\nabla \Lcal(\theta^t)}^2}
	\le \Ocal(\frac{1}{\sqrt{mNT}}) + \Ocal(\frac{mN}{T})
	\end{equation}
\end{theorem}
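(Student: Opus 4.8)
The plan is to reuse the convergence analysis of \cite{FedNova} that establishes Theorem \ref{theo:convergence_FP} for MD sampling, and to show that the clustered scheme can be substituted into that argument without degrading any of the intermediate bounds. The sampling scheme enters the descent analysis only through the stochastic aggregation of the local updates: concretely, through the first moment $\EE{S_t}{\theta^{t+1}}$ and the second moment of the aggregated update around its mean. Assumptions \ref{ass:smoothness} to \ref{ass:dissimilarity} are untouched by the choice of sampling, so the only task is to control these two moments for clustered sampling and verify they are bounded by their MD counterparts.

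First I would record the first-moment (unbiasedness) property, which is exactly Proposition \ref{prop:equivalent}: under equations (\ref{eq:sum_proba}) and (\ref{eq:total_probability}), $\EE{S_t}{\theta^{t+1}} = \sum_i p_i \theta_i^{t+1}$. This is the property that lets the expected one-step progress be related to the full-participation gradient, identically to the MD case, so every place where the FedNova proof invokes unbiasedness carries over unchanged.

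The main technical step is the second moment. Writing the clustered aggregate as $\theta^{t+1} = \frac{1}{m}\sum_{k=1}^m \theta_{i_k}^{t+1}$ with $i_k \sim W_k(t)$ drawn independently, I would expand $\EE{S_t}{\norm{\frac{1}{m}\sum_k \theta_{i_k}^{t+1}}^2}$ into diagonal and cross terms. Independence of the $\{W_k(t)\}$ annihilates the covariances, leaving $\E{\langle \theta_{i_k}^{t+1}, \theta_{i_{k'}}^{t+1}\rangle} = \langle v_k, v_{k'}\rangle$ for $k \neq k'$, where $v_k \coloneqq \sum_i r_{k,i}^t \theta_i^{t+1}$. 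The diagonal sum, using the marginal identity (\ref{eq:total_probability}) $\sum_k r_{k,i}^t = m\, p_i$, collapses to $\frac{1}{m}\sum_i p_i \norm{\theta_i^{t+1}}^2$, which is precisely the diagonal contribution of MD sampling. For the cross terms I would use $\sum_k v_k = m\sum_i p_i \theta_i^{t+1}$ (again from (\ref{eq:total_probability})) together with the Cauchy--Schwarz / Jensen inequality $\sum_k \norm{v_k}^2 \ge \frac{1}{m}\norm{\sum_k v_k}^2$, which yields $-\frac{1}{m^2}\sum_k \norm{v_k}^2 \le -\frac{1}{m}\norm{\sum_i p_i \theta_i^{t+1}}^2$. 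Summing the contributions shows the clustered second moment is bounded above by $\frac{1}{m}\sum_i p_i\norm{\theta_i^{t+1}}^2 + (1-\frac{1}{m})\norm{\sum_i p_i \theta_i^{t+1}}^2$, which is exactly the MD value recovered in the special case $r_{k,i}^t = p_i$. This single inequality simultaneously reproves the variance-reduction claim and supplies the bound the FedNova argument needs.

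Finally, since every sampling-dependent quantity in the FedNova argument is now controlled by its MD analogue, the remaining steps --- bounding the drift of local SGD, applying the smoothness and bounded-dissimilarity assumptions, and telescoping the per-round descent over $t = 0,\dots,T$ with $\eta = \sqrt{m/NT}$ --- proceed verbatim, producing the identical rate $\Ocal(1/\sqrt{mNT}) + \Ocal(mN/T)$. The main obstacle is the cross-term analysis: unlike MD, where all $m$ distributions coincide and the $v_k$ are equal, here the $v_k$ are heterogeneous, so the bound rests essentially on combining independence of the $\{W_k(t)\}$ with the marginal constraint (\ref{eq:total_probability}) and a convexity argument to show that the heterogeneous cross terms can never exceed the homogeneous MD value.
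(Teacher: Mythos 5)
Your proposal is correct and follows essentially the same route as the paper: the paper also reduces Theorem \ref{theo:convergence_clustered} to verifying the two FedNova moment conditions (Lemma \ref{lem:sampling_conditions}) for clustered sampling, and its key computation is the same as yours --- independence of the $\{W_k(t)\}$ to kill cross-covariances, the marginal constraint (\ref{eq:total_probability}) to collapse the diagonal to the MD value, and Jensen/Cauchy--Schwarz on the per-distribution means $\nabla \Lcal_{W_k}(x)=\sum_i r_{k,i}^t\nabla\Lcal_i(x)$ to dominate the heterogeneous cross terms. The only cosmetic difference is that you bound the raw second moment of the aggregate directly by its MD counterpart and inherit the lemma, whereas the paper re-derives the specific bound of Lemma \ref{lem:sampling_conditions} through the three-term decomposition and Assumption \ref{ass:dissimilarity}, noting the domination ($B_{Cl}\le B_{MD}$) only as a corollary.
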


In the proof of Theorem \ref{theo:convergence_clustered} (Appendix \ref{app:proof_theo_2}), we show that this convergence bound holds for any clustered sampling scheme satisfying Proposition \ref{prop:equivalent}. Moreover, the convergence bound of MD sampling is a bound itself for the convergence of a general clustered sampling scheme satisfying Proposition \ref{prop:equivalent}. Therefore, clustered sampling enjoys better convergence guarantees than MD sampling.

\subsection{Improvements provided by clustered sampling}\label{sec:clustered_sampling_improvments}

We introduced clustered sampling in Section \ref{sec:clustered_sampling_introduction} and showed that under the condition of Proposition \ref{prop:equivalent} it provides the same convergence bound of MD sampling. In this section, we investigate the statistical benefits of clustered sampling with respect to MD sampling.

We define by $\omega_i(S)$ the aggregation weight of client $i$ with subset of sampled clients $S$, and by $S_{MD}$ and $S_C(t)$ the subset of clients sampled at iteration $t$ with respectively MD and clustered sampling.

We consider a clustered sampling scheme following Proposition \ref{prop:equivalent}. Hence, for both MD and clustered sampling, the expected aggregation equals the deterministic aggregation when considering all the clients leading to:
\begin{equation}
\EE{S_{MD}(t)}{\omega_i(S_{MD})}
= \EE{S_C(t)}{\omega_i(S_C(t))} 
= p_i 
. 
\end{equation}
With clustered sampling, we first show that every client has a smaller aggregation weight variance. 
A client's aggregation weight can be written as $\omega_i(S) = \frac{1}{m}\sum_{k=1}^{m}\{ l_k = i\}$, where $l_k$ is the index of the $k^\text{th}$ sampled client. With MD sampling, $m$ clients are iid sampled according to $\mathcal{B}(p_i)$, a Bernoulli distribution with probability $p_i$, giving the following variance:
\begin{align}
\VARR{S_{MD}}{\omega_i(S_{MD})}
&= \frac{1}{m^2}m \VAR{\mathcal{B}(p_i)}\\
&= \frac{1}{m^2}m p_i(1-p_i) .
\end{align}
Clustered sampling instead selects independently $m$ clients according to the distributions $\{W_k(t)\}_{k=1}^m$. Therefore, each client is sampled according to $\mathcal{B}(r_{k, i}^t)$ giving:
\begin{align}
\VARR{S_{C}(t)}{\omega_i(S_C(t))}
&= \frac{1}{m^2} \sum_{k=1}^{m} \VAR{\mathcal{B}(r_{k, i}^t)}\\
&= \frac{1}{m^2} \sum_{k=1}^{m} r_{k,i}^t(1-r_{k,i}^t) \label{eq:variance_clustered}
.
\end{align}
By the Cauchy-Schwartz inequality, one can prove that 
\begin{equation}
\VARR{S_{MD}}{\omega_i(S_{MD})}
\ge \VARR{S_{C}(t)}{\omega_i(S_C(t))},
\end{equation}
with equality if and only if all the $m$ distributions are equal to the one of MD sampling, i.e. $\forall k,\ W_k(t) = W_0$. Exact derivation is given in Appendix \ref{app:stat_measure_clust_2}. Therefore, with clustered sampling, every client has a smaller aggregation weight variance. Another interesting statistical measure of representativity is the probability for a client to be sampled, i.e. $\mathbf{P}(\{i\in S\})$. 
In particular, increasing the probability for every client to be sampled is mandatory to allow a proper representation of each client's data specificity in the global model, especially in heterogeneous setting, such as with non-iid and unbalanced clients.
For this statistical measure, clustered sampling also provides better guarantees than MD sampling. With MD sampling, clients are iid sampled giving
\begin{align}
p(i \in S_{MD})
&= 1 - p(\{i \notin S_{MD}\})\\
&= 1 - p(\{i \notin W_0\})^m\\
&= 1 - (1-p_i)^m .
\end{align}
Similarly, with clustered sampling we get:
\begin{align}
p(i \in S_C(t))
&= 1 - \prod_{k=1}^m p(\{i \notin W_k(t)\})\\
&= 1 - \prod_{k=1}^m (1- r_{k,i}^t) .
\end{align} 
Since we assume here that clustered sampling follows 
% Assumption \ref{ass:sampling} 
Proposition \ref{prop:equivalent}, from equation (\ref{eq:total_probability}), and from the inequality of arithmetic and geometric means, we get:
\begin{equation}
p(\{i \in S_{MD}(t)\}) 
\le
p(\{i \in S_C(t)\}),
\end{equation}
with equality if and only if all the $m$ distributions are equal to the one of MD sampling, i.e. $\forall k,\ W_k(t) = W_0$ (derivation in Appendix \ref{app:stat_measure_clust_2}). Therefore, with clustered sampling, every client has an higher probability of being sampled and thus is better represented throughout the FL process.

In conclusion, clustered sampling reduces clients aggregation weights variance and increases their representativity. These results are important for FL applications with heterogeneous federated dataset. Increasing a client representativity ensures that clients with unique distributions are more likely of being sampled, and can potentially lead to smoother and faster FL convergence.

\section{Clustered sampling based on sample size}\label{sec:basic_clustered}

\begin{algorithm}[tb]
	\caption{Clustered sampling based on sample size}
	\label{alg:clustered_basic}
	\begin{algorithmic}[1]
		\STATE {\bfseries Input:} $\{n_i\}_{i=1}^n$ clients number of samples
		
		\STATE Order clients by descending importance of $n_i$.
		
		\STATE $k \leftarrow 1$ distribution index.
		
		\STATE $q \leftarrow 0$ sum of samples.
		
		\STATE $M \leftarrow \sum_{i=1}^n n_i$ total number of samples.

		\FOR{each client $i=1$ {\bfseries to} $n$}
		
		    \STATE $q \leftarrow q + m n_i$
		    
		    \STATE $q = a_i M + b_i$ with $a_i$ and $b_i$ non negative integers
		    
		    \IF{$a_i > k$}
		    
				\STATE $r_{k, i}' \leftarrow M - b_{i-1}$
				
				\STATE $\forall l \ge k +1 \text{ s.t. } (a_i -1) - l\ge 0,\ r_{k, i}' \leftarrow M$
			\ENDIF
			
			\STATE $r_{a_i, i}' \leftarrow b_i$
			
			\STATE $ k \leftarrow a_i$

		\ENDFOR
		
		\STATE {\bfseries Output:}
% 		$W_1,..., W_m$, clients 
        Sampling probabilities $r_{k, i} = r_{k, i}'/M$.
		
	\end{algorithmic}
\end{algorithm}

We introduced and showed the convergence of unbiased clustered sampling in Section \ref{sec:clustered_sampling}. Clustered sampling schemes compatible with Proposition \ref{prop:equivalent} are numerous, including MD sampling. In this section, we first provide an unbiased clustered sampling scheme based on the number of samples $n_i$ owned by each client.
% With clustered sampling, every client's variance follows equation (\ref{eq:variance_clustered}). Minimizing a client variance is equivalent to maximizing $ f (\{r_{k,i}^t\}_{k=1}^m) = - \VARR{S_{C}(t)}{\omega_i(S_C(t))}$ which can be rewritten as
% \begin{equation}
%     f (\{r_{k,i}^t\}_{k=1}^m)
%     % =
%     % - \VARR{S_{C}(t)}{\omega_i(S_C(t))}
%     = 
%     \frac{1}{m^2} \sum_{k=1}^{m} {r_{k,i}^t}^2-r_{k,i}^t
% \end{equation}
% We want to obtain a clustered sampling satisfying Proposition \ref{prop:equivalent}. Therefore the linear term in $f$ remains constant due to equation (\ref{eq:total_probability}) and $f$ sum of quadratic term is maximized when a client is sampled in the least amount of distributions with probability 1 and one distribution with probability $m p_i - \floor{mp_i}$. Based on this result, we propose Algorithm \ref{alg:clustered_basic} as a scheme to estimate such distributions and related probabilities $r_{k, i}^t$.
The proposed scheme, illustrated in Algorithm \ref{alg:clustered_basic} , is compatible with Proposition \ref{prop:equivalent}. In particular, we have the following theorem:

\begin{theorem}\label{theo:basic_clustered}
	Algorithm \ref{alg:clustered_basic} outputs $m$ distributions for a clustered sampling satisfying Proposition \ref{prop:equivalent}. The complexity of the algorithm is $\Ocal(n log(n))$.
\end{theorem}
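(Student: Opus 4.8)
The plan is to recognize Algorithm \ref{alg:clustered_basic} as a deterministic "water-filling" procedure and to verify the two defining conditions of Proposition \ref{prop:equivalent}, feasibility (\ref{eq:sum_proba}) and unbiasedness (\ref{eq:total_probability}), through a single geometric reformulation, after which the complexity bound falls out of a counting argument. First I would set up the picture. Let $q_0 = 0$ and $q_i = \sum_{j=1}^i m n_j$ be the running sums held in the variable $q$, so that $q_n = mM$. I lay the total mass $mM$ on the segment $[0,mM)$, assign to client $i$ the subinterval $I_i = [q_{i-1}, q_i)$ of length $m n_i$, and partition the segment into $m$ consecutive bins $B_k = [(k-1)M, kM)$ of length $M$ each. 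The claim is that the values $r'_{k,i}$ produced by the algorithm equal the overlap lengths $\lvert I_i \cap B_k\rvert$. Granting this identification, both conditions are immediate: since the intervals $I_i$ tile $[0,mM)$, summing the overlaps over clients gives $\sum_i r'_{k,i} = \lvert B_k\rvert = M$, hence $\sum_i r_{k,i} = 1$ with $r_{k,i}\ge 0$, which is (\ref{eq:sum_proba}); and since the bins tile the same segment, summing over bins gives $\sum_k r'_{k,i} = \lvert I_i\rvert = m n_i$, hence $\sum_k r_{k,i} = m n_i / M = m p_i$, which is (\ref{eq:total_probability}). By Proposition \ref{prop:equivalent} the output is then an unbiased clustered sampling scheme.

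The substantive step, and the one I expect to be the main obstacle, is proving that the index bookkeeping of the loop actually computes these overlaps. Here I would argue by induction on $i$, maintaining the invariant that before processing client $i$ bins $1,\ldots,k-1$ are completely filled, the current bin $k$ is filled precisely up to the level recorded by the remainder $b_{i-1}$, and the running variable accounts exactly for the mass placed so far, $q_{i-1}$. Writing $q_i = a_i M + b_i$ by integer division, client $i$'s interval $I_i$ begins in the current bin and ends in bin $a_i$; the update $r'_{k,i} \leftarrow M - b_{i-1}$ then records the room left in the current bin, the inner loop over $l$ with $k+1 \le l \le a_i-1$ records the intermediate bins that $I_i$ covers in full, and $r'_{a_i,i} \leftarrow b_i$ records the partial fill of the terminal bin. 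The delicate part is the exhaustive case analysis over the boundary configurations: when $I_i$ is contained in a single bin, so the test $a_i > k$ fails and only the terminal update fires; and when $q_{i-1}$ or $q_i$ lands exactly on a bin boundary, so that a remainder vanishes. In each case one must check that the comparison $a_i > k$ together with the chosen remainder convention routes every unit of mass to the correct bin with neither double counting nor an off-by-one shift. Confirming that every configuration reproduces $\lvert I_i \cap B_k\rvert$ closes the induction and justifies the identification used above.

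Finally, for the complexity I would account for the three phases separately. The initial ordering of the $n$ clients by descending $n_i$ costs $\Ocal(n\log n)$. The main loop runs $n$ times, each iteration performing $\Ocal(1)$ arithmetic together with the two constant-cost updates for the current and terminal bins, plus the inner loop that assigns the value $M$ to entirely-covered bins. Because the intervals $I_i$ are pairwise disjoint, no bin $B_k$ can be covered in full by two distinct clients, so across the whole run the inner loop performs at most $m$ such assignments in total; since the cohort size satisfies $m \le n$, this contributes only $\Ocal(n)$. Summing the three contributions, the ordering dominates and the overall cost is $\Ocal(n\log n)$, as claimed.
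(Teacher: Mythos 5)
Your proof is correct and follows essentially the same route as the paper's: the interval-tiling (``water-filling'') picture you describe is exactly the allocation scheme the paper illustrates in Figure~\ref{app:fig:alg_basic}, the two identities $\sum_i r'_{k,i}=M$ and $\sum_k r'_{k,i}=mn_i$ are the paper's verification of equations (\ref{eq:sum_proba}) and (\ref{eq:total_probability}), and the complexity accounting (sorting dominates, allocation is $\Ocal(n+m)=\Ocal(n)$ since full bins are never revisited) matches the appendix argument. If anything, your loop-invariant induction is more explicit than the paper's treatment, which leaves the per-iteration bookkeeping at the level of the picture and the counting identity $M(m-\sum_i a_i)=\sum_i b_i$.
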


% The proof of Theorem \ref{theo:basic_clustered} is provided in Appendix \ref{app:algs_proofs}, along with schematics for the illustrated explanation of Algorithm \ref{alg:clustered_basic}. 
% With Algorithm \ref{alg:clustered_basic},
% the weights of the clients in each distribution $\{W_k\}_{k=1}^m$ are determined based on the relative sample size $n_i/ M$. Ideally, if for a client this quantity is such that  $n_i/ M = 1/m$, this client is allocated to a single distribution $W_k$ where it is sampled with probability $r_{k, i} = 1$. If $n_i/ M > 1/m$, the client is sampled with probability 1 in  $\floor{n_i/ M}$ distributions, and sampled with probability $r_{k,i} = m n_i/M - m\floor{n_i/ M}$ in another distribution $W_k$. Finally, if $n_i/ M < 1/m$, the client is sampled with probability $r_{k,i} = m n_i/ M$ in a single distribution $W_k$. The partitioning of the clients' sampling probabilities across the $m$ distributions $\{W_k\}_{k=1}^m$ is possible and satisfies Proposition \ref{prop:equivalent} since $\sum_{i=1}^n m n_i = m M$, and thus both $m$, the number of distributions, and $M$, the total number of samples, are integer dividers of the same quantity.

\begin{proof} 
Algorithm \ref{alg:clustered_basic} identifies the $m$ distributions $W_k$ by defining $m$ sets $q_k$ of cardinality $M$ in which each client $i$ is represented with probability $r_{k,i}$. The sets are constructed as follows. We define by $n_i’ =  m n_i$ the total number of samples to be allocated for each client. We thus have $mM$ samples to allocate over the $m$ sets $q_k$. For each client, the integer division $n_i’ = M a_i + b_i$, identifies $a_i$ sets for which the client must be represented with probability 1. The remaining $b_i$ samples are allocated to the remaining $m -\sum_i a_i$ sets. This is possible by observing that $Mm = \sum_i n_i' = M (\sum_i a_i) + \sum_i b_i$, and therefore $M (m -\sum_i a_i) = \sum_i b_i$. By construction, Proposition \ref{prop:equivalent} is satisfied: $|q_k| = M$ implies equation (\ref{eq:sum_proba}), while equation (\ref{eq:total_probability}) is met since, for each client, the total number of samples distributed across the sets $q_k$ is $n_i’ = m n_i$, and thus each client is represented with proportion $m p_i$ across all the distributions.
Algorithm \ref{alg:clustered_basic} provides the practical implementation of this scheme. 

The complexity of Algorithm \ref{alg:clustered_basic} is derived in Appendix \ref{app:algs_proofs}, where we also provide a schematic illustration of the allocation procedure.
\end{proof}

%clients are ranked based on their number of samples $n_i$. Following the ordering, clients contribute to the distribution $W_1$ with weight $ r_{1,i} = m n_i/ M$, until $ \sum_{i=1}^n r_{1,i} = m n_i/ M$ equation (\ref{eq:total_probability}) 

%If $\sum_{i\in W_1}^n m n_i = m M$  If  in a group until the group The sorted list of clients is then partitioned to obtain $m$ groups.  for which the total number of samples is $M$. This partitioning is possible since we have $\sum_{i=1}^n m n_i = m M$, and thus both $m$, the number of groups, and $M$, the total number of samples, are integer dividers of the same quantity. These groups are associated to distributions ${W_k}_{k=1}^m$,  where the probability of sampling a client is equal to the relative sample size of the client in the group. 
%A client can be allocated to a distribution of index $k$, $W_k$, only if all the distributions with index inferior to $k$ have been allocated with enough clients samples, i.e. to satisfy equation (\ref{eq:sum_proba}).
% Algorithm \ref{alg:clustered_basic} provides a practical implementation of such allocation scheme. 
We note that with Algorithm \ref{alg:clustered_basic} a client $i$ can be sampled up to $\floor{m p_i} +2$ times. This is an improvement from MD sampling where clients can be instead sampled up to $m$ times. Since clustering is performed according to the clients sample size $n_i$, unless $n_i$ changes during the learning process, Algorithm \ref{alg:clustered_basic} needs to be run only once at the beginning of the learning process, i.e. $\{W_k(t)\}_{k=1}^m = \{W_k\}_{k=1}^m$.

\section{Clustered sampling based on similarity}\label{sec:clustered_improved}

\begin{algorithm}[tb]
	\begin{algorithmic}[1]
		
		\STATE {\bfseries Input:} $\{n_i\}_{i=1}^n$ clients number of samples, $\{G_i\}_{i=1}^n$ clients representative gradient, $m$ number of sampled clients, \textit{clustering method} (e.g. Ward method), $s$ similarity function (e.g. Arccos)
		
		\STATE Estimated hierarchical clustering $P$ with \textit{clustering method} from similarity matrix $\rho$ with $\rho_{i,j} = s(G_i, G_j)$.
		
		\STATE Cut $P$ to determine $K\ge m$ groups $\{B_k\}_{k=1}^K$. We define $q_k$ as the total number of samples of the corresponding clients: $q_k = \sum_{i \in B_k} mn_i\le M$.
		
		\STATE Order the groups $\{B_k\}_{k=1}^K$ by decreasing $q_k$.
		
		\STATE Define clients number of samples in the $m$ distributions $\{W_k\}_{k=1}^m$ based on the ranking of $q_k$ : $\forall k \le m, \forall i \in B_k,\ r_{k, i}' \leftarrow m n_i$. 

        \STATE Create a set with the clients of the remaining groups $S = \{\{i, u_i = m n_i\},\ \forall i \in B_{m+1} \cup ... \cup B_K\}$
		
		\STATE $k \leftarrow 1$ Start considering the first distribution $W_k$
		
		\REPEAT
		
			\STATE Select first client $i$ in $S$ with $u_i$ samples to allocate
			
% 			\STATE $ q_k + u_i = a_i M + b_i$, $a_i$ and $b_i$ non negative integers
			
			\STATE Determine $a_i$ and $b_i$ the quotient and remainder of the euclidean division of $q_k + u_i$ by $M$

			\IF{$a_i = 0$}
				\STATE $r_{k, i}' \leftarrow b_i$ and $i$ removed from $S$
				
% 			\ELSIF{$a_i =1$ and $b_i = 0$}
% 				\STATE $r_{k, i}' \leftarrow u_i = M - q_k$ and $i$ removed from $S$
% 				\STATE $k \leftarrow k + 1$
				
% 			\ELSE
% 				\STATE $r_{k, i}' \leftarrow M - q_k$ and $u_i \leftarrow u_i - r_{k, i}'$
% 				\STATE $k \leftarrow k + 1$
				
			\ELSE
				\STATE $r_{k, i}' \leftarrow M - q_k$ 
				\STATE $u_i \leftarrow u_i - r_{k, i}'$ 
                \STATE Remove $i$ from $S$ if $u_i = 0$
				\STATE $k \leftarrow k + 1$

			\ENDIF
			
		\UNTIL{$S = \emptyset $, Proposition \ref{prop:equivalent} is satisfied}
		
		\STATE {\bfseries Output:} Sampling probabilities $r_{k,i} = r_{k,i}'/M$.
	\end{algorithmic}
	\caption{Clustered sampling based on model similarity}
	\label{alg:improved_clus_sampling}
\end{algorithm}

We have shown, in Section \ref{sec:clustered_sampling}, that unbiased clustered sampling is a generalization of MD sampling providing smaller aggregation weight variance for every client, and we proposed in Section \ref{sec:basic_clustered} an algorithm to practically fulfill Proposition \ref{prop:equivalent} to obtain $m$ distributions grouping clients based on their number of samples $n_i$. 

In this section we extend the approach of Section \ref{sec:clustered_sampling} to define a novel clustered sampling scheme where sampling distributions are defined based on the similarity across clients.
% In addition of reducing every client's aggregation weight variance, equation (\ref{eq:variance_clustered}), we propose, with 
%Algorithm \ref{alg:improved_clus_sampling} provides an unbiased clustered sampling scheme to obtain $m$ distributions where clients are clustered according to the similarities between their respective updates. 
In what follows we define clients similarity based on the measure of \textit{representative gradient}. The representative gradient is the difference between a client's updated model and the global model. Comparing clients' representative gradients at a given iteration is shown to be an effective approach for detecting similarity between FL participants \cite{CFL}. 

Algorithm \ref{alg:improved_clus_sampling} adopts this concept to define a clustered sampling scheme compatible with Proposition \ref{prop:equivalent}. We have:

%When considering all the clients at a given iteration, their updated model $\theta_i^{t+1}$ is obtained from the same initial iteration, the current global model $\theta^t$.
%We prove in Theorem \ref{theo:improved_clustered} that Algorithm \ref{alg:improved_clus_sampling} is an unbiased clustered sampling, i.e. follows Assumption \ref{ass:sampling}.

\begin{theorem}\label{theo:improved_clustered}
    If for every client $p_i\le 1/m$, Algorithm \ref{alg:improved_clus_sampling} outputs $m$ distributions for a clustered sampling satisfying Proposition \ref{prop:equivalent}. The complexity of the algorithm is in $\Ocal(n^2d + X)$, where $d$ is the number of parameters in the model, and $X$ is the complexity of the clustering method.
\end{theorem}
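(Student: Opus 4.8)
The plan is to verify the two claims of the theorem separately: first that the output $\{r_{k,i}\}$ satisfies both conditions of Proposition \ref{prop:equivalent}, and second that the running time is $\Ocal(n^2 d + X)$. Throughout I would work with the unnormalised quantities $r_{k,i}'$ and normalise by $M$ only at the end; in these terms the two conditions become $\sum_i r_{k,i}' = M$ for every distribution $k$ (equation (\ref{eq:sum_proba})) and $\sum_k r_{k,i}' = m n_i$ for every client $i$ (equation (\ref{eq:total_probability})).

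Equation (\ref{eq:total_probability}) is the easier of the two and I would establish it first as a conservation statement. Every client is allocated exactly $m n_i$ samples across the distributions: a client belonging to one of the first $m$ groups receives $r_{k,i}' = m n_i$ in the single assignment of step 5, while a client placed in the set $S$ enters with a budget $u_i = m n_i$ that is decremented by each partial allocation and is removed from $S$ only once $u_i$ reaches $0$. Hence $\sum_k r_{k,i}' = m n_i$, i.e. $\sum_k r_{k,i} = m n_i / M = m p_i$, which is exactly equation (\ref{eq:total_probability}).

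The core of the argument is equation (\ref{eq:sum_proba}), namely that the greedy filling leaves every distribution exactly full. The key accounting identity is that the total mass to be placed equals the total capacity: summing over all $K$ groups gives $\sum_{k=1}^K q_k = \sum_i m n_i = mM$, so the capacity left in the $m$ initial distributions, $\sum_{k=1}^m (M - q_k) = mM - \sum_{k=1}^m q_k = \sum_{k=m+1}^K q_k$, coincides with the total budget $\sum_{i \in S} u_i$ carried by the clients of $S$. This is precisely where the hypothesis $p_i \le 1/m$ enters: it is equivalent to $u_i = m n_i \le M$, so any single client fits within one distribution's capacity and, when added to a partially filled distribution of current mass $q_k < M$, overflows by at most $M$, i.e. the quotient $a_i$ of $(q_k + u_i)/M$ is either $0$ or $1$. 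I would then argue by induction on the loop iterations, maintaining the invariant that distributions $1, \dots, k-1$ are filled to exactly $M$ and distribution $k$ holds mass $q_k < M$. When the current client fits ($a_i = 0$) its remaining budget is absorbed into $W_k$ and it is removed from $S$; when it overflows ($a_i = 1$) it fills $W_k$ exactly to $M$, the index advances to $k+1$, and the residual $u_i - (M - q_k)$ is carried over as the opening mass of the next distribution. Because $u_i \le M$, this residual never itself overflows, so the single-carry logic of the algorithm is always valid. Combined with the capacity/budget balance above, this forces $S$ to empty exactly as the last distribution reaches $M$, establishing $\sum_i r_{k,i}' = M$ for all $k$ and hence equation (\ref{eq:sum_proba}).

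For the complexity, forming the similarity matrix $\rho_{i,j} = s(G_i, G_j)$ requires $\Ocal(n^2)$ comparisons of $d$-dimensional representative gradients at cost $\Ocal(d)$ each, giving $\Ocal(n^2 d)$; the hierarchical clustering and cut contribute $\Ocal(X)$ by definition; sorting the $K \le n$ groups by $q_k$ costs $\Ocal(n \log n)$; and the allocation loop performs $\Ocal(n)$ iterations of constant cost, since each iteration either removes one of the at most $n$ clients from $S$ or advances $k$ towards $m$. Summing and absorbing the dominated $\Ocal(n \log n)$ term yields $\Ocal(n^2 d + X)$. The main obstacle is the inductive tightness argument for equation (\ref{eq:sum_proba}): one must show that the greedy carry-over neither leaves a distribution short nor overshoots, which is exactly the step that breaks down without $p_i \le 1/m$ and is the reason the hypothesis is needed.
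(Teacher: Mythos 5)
Your proof is correct and follows essentially the same route as the paper's: equation (\ref{eq:total_probability}) by conservation of each client's budget $m n_i$, equation (\ref{eq:sum_proba}) by the capacity/budget identity $\sum_{k=1}^m (M - q_k) = \sum_{k=m+1}^K q_k$ for the greedy fill, and the same term-by-term complexity count. The only difference is one of rigor: the paper asserts the allocation "is possible by construction," whereas you make the loop invariant and the role of the hypothesis $p_i \le 1/m$ (ensuring $u_i \le M$, hence valid probabilities and at most a single overflow per step) explicit.
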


% In practice, Algorithm \ref{alg:improved_clus_sampling} requires to compute the similarity $\rho_{i,j}$ between every client $i$ and $j$, where standard metrics such as  L2, L1, or Arccos, can be employed to obtain a similarity tree (for example by using the Ward method \cite{WARD_theorem}). Based on the hierarchy associated to the resulting tree, Algorithm \ref{alg:improved_clus_sampling}  defines an optimal cut of the tree allowing to obtain a grouping of the clients, where each group is associated with at most $M$ total samples. By construction we obtain $K\ge m$ groups. If $K=m$, the partitioning is already obtained, and the distributions $\{W_k\}_{k=1}^m$ are the relative proportions of clients in each of the $m$ groups. In case $K> m$, the algorithm ranks the groups in order of associated sample size, and defines $\{W_k\}_{k=1}^m$ distributions representing the relative clients proportions in each of the largest $m$ groups. The clients in the $K-m$ remaining groups are subsequently redistributed in the $m$ groups according to their relative sample size, to define the remaining sampling proportions for the distributions $\{W_k\}_{k=1}^m$. Proposition \ref{prop:equivalent} is satisfied since $\sum_{i=1}^n m n_i = m M$, and thus both $m$ and $M$ are integer dividers of the total number of clients samples, which guarantees the allocation of each client (see supplementary Figure \ref{app:fig:alg_similarity} for an illustration of this scheme).

\begin{proof} Algorithm \ref{alg:improved_clus_sampling} is similar to Algorithm \ref{alg:clustered_basic}, with the additional constraint that the number of clusters $K$ can differ from the number $m$ of distributions. If $K = m$, the clients are already allocated in $q_k$ sets, and the same reasoning of Algorithm \ref{alg:clustered_basic} can be applied. If $K > m$, we consider again the partitioning problem over $m$ sets $q_k$ of cardinality $M$. We define again by $n_i’ =  m n_i$ the total number of samples to be allocated for each client, and we have $mM$ samples to allocate over the $m$ sets $q_k$. Differently from Algorithm \ref{alg:clustered_basic}, we initialize the allocation with the clustering. In particular, we assign to each set $q_k$ the $n_i’$ samples of the clients included in cluster $k$. By construction, each of these sets $q_k$ has cardinality  $|q_k|\leq M$. We consider the $m$ largest sets, and distribute in these sets the remaining samples of the $K-m$ clusters until $|q_k|=M$, for each $k$. By construction, this allocation is possible since we have $mM$ total number of samples to be distributed across $m$ sets of cardinality $M$.
As for Algorithm \ref{alg:clustered_basic}, Proposition \ref{prop:equivalent} is satisfied: $|q_k| = M$ implies equation (\ref{eq:sum_proba}), while equation (\ref{eq:total_probability}) is met since, for each client, the total number of samples distributed across sets $q_k$ is $n_i’ = m n_i$. 
Appendix \ref{app:algs_proofs} completes the proof on the complexity of the algorithm.
\end{proof}
% Algorithm \ref{alg:improved_clus_sampling} provides the practical implementation of the allocation. 
As for Algorithm \ref{alg:clustered_basic}, Appendix \ref{app:algs_proofs} provides a schematic for a better illustration of the algorithm. Being a clustered sampling scheme, the variance of the clients aggregation weights of Algorithm \ref{alg:improved_clus_sampling} is bounded (equation (\ref{eq:variance_clustered})). Moreover, since the  distributions are obtained from the similarity tree resulting from the representative gradients, this scheme explicitly promotes the sampling of clients based on their similarity. Finally, with Algorithm \ref{alg:improved_clus_sampling}, the sampling from the distributions $\{W_k\}_{k=1}^m$ can be performed even when no representative gradient is available for the clients, for example if clients have not been sampled during FL yet. In this case we simply consider a constant 0 representative gradient for those clients, and thus group them together to promote their representativity in the same distribution. 

We recall that Algorithm \ref{alg:improved_clus_sampling} does not require to share gradients across clients, but only the difference between local and global models (a.k.a. representative gradients). Thus, the communication cost is the same of standard FL while the privacy properties of FL privacy remain identical.

% In practice, the $m$ distributions are computed at every iteration, while the server is required to compute the similarity between sampled clients and all the other clients. Therefore similarity matrix can be estimated in complexity $\Ocal(nmd)$, and Algorithm \ref{alg:improved_clus_sampling} has complexity $\Ocal(n m d + X)$. 

% We note that the similarity tree can be obtained with different hierarchical clustering methods. 
We emphasize that any valid hierarchical clustering algorithm can be used in Algorithm \ref{alg:improved_clus_sampling}. Without loss of generality, in the rest of this work we consider the Ward hierarchical clustering method \cite{WARD_theorem}, which allows to obtain a similarity tree by minimizing at every node the variance of its depending clients. This method has complexity $\Ocal(n^2 \log(n))$.
We finally observe that the time complexity of Algorithm \ref{alg:improved_clus_sampling}  is not necessarily an issue, even in presence of an important amount of clients. After aggregation of the new global model, the server can sample the clients, and transmit it to them. While waiting for their local work to be completed, the server can therefore estimate the new partitioning. In this way, Algorithm \ref{alg:improved_clus_sampling} is equivalent to MD sampling for what concerns the process of receiving the updated models, and transmitting the new global model to the clients. 

As a final observation, while Algorithm \ref{alg:improved_clus_sampling} is originally designed for sampling scenarios where $p_i\le 1/m$, with few modifications it  can be also used for federated datasets composed of clients with larger sample size, i.e. when $I = \{i: p_i \ge 1/m \}\neq \emptyset$, or equivalently $I = \{i: m n_i \ge M \}\neq \emptyset$. In this case, %clients in $I$ have enough samples such that the quantity $m n_i$ that needs to be allocated in the $m$ distributions is superior to the maximum number of samples $M$ a distribution can contain, i.e. $ m n_i \ge M$. 
we can simply allocate those clients in specific distributions, where they are sampled with probability 1. In total, we obtain $\floor{m \frac{n_i}{M}}$ distributions of this kind. The remaining samples  $m n_i - \floor{m \frac{n_i}{M}}M < M$ will be then redistributed according to Algorithm \ref{alg:improved_clus_sampling}. %These distribution will account for $M$ of the  for a proportion of The remaining clients will be allocated by following Algorithm and  $\floor{m \frac{n_i}{M}}$ distributions where only $i$ can be sampled. We remind that $M$ is the total number of samples, i.e. $M = \sum_{i=1}^n n_i$. Therefore, $\sum_{i=1}^n \floor{m \frac{n_i}{M}}\le m$ and $\floor{m \frac{n_i}{M}}=0$ for the clients not in $I$. 
\section{Experiments}\label{sec:experiments}

\begin{figure}
	\includegraphics[width=\linewidth]{./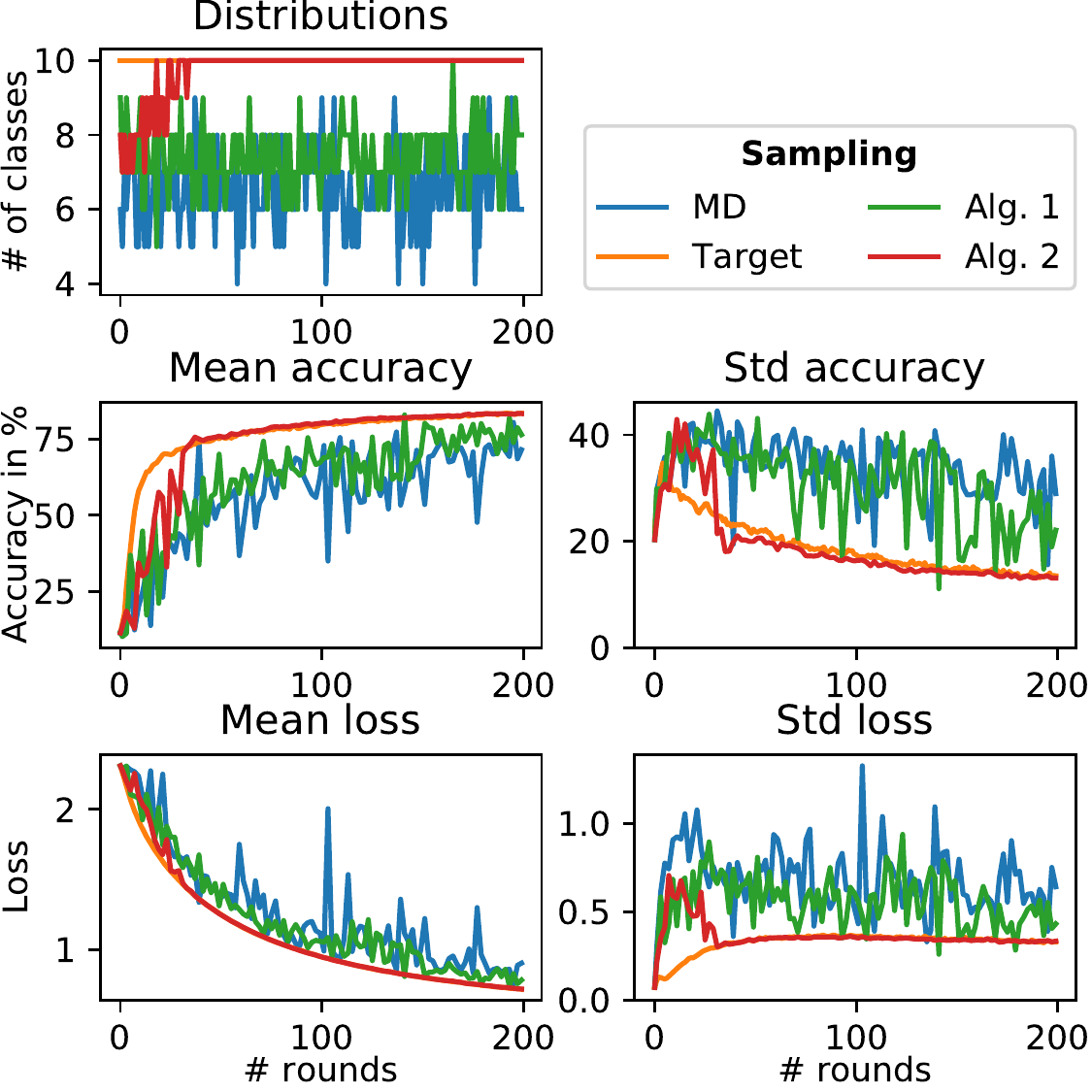}
	\caption{Comparison of MD sampling with clustered sampling of Algorithm \ref{alg:clustered_basic} and \ref{alg:improved_clus_sampling} using cosine angle for the similarity measure. $n=10$ clients from which $m=10$ are sampled to perform $N=50$ SGD with learning rate $lr=0.01$ and batch size $B=50$.}
	\label{fig:illu_improved_clust}
\end{figure}

We first show on a standard classification problem on MNIST \cite{Lecun1998}, the advantages of clustered sampling obtained with Algorithm \ref{alg:clustered_basic} and \ref{alg:improved_clus_sampling} with respect to MD sampling.  
We consider a fully connected network with one hidden layer of 50 nodes. We create a federated dataset composed of 100 clients where each one has 500 training and 100 testing samples composed by one digit only. Each digit is owned by 10 clients, every client has the same number of samples, and the server samples 10 clients at every iteration.

We note that an ideal clustering method for this FL problem consists in creating 10 clusters each containing the 10 clients with same classes. At each FL round, we should sample a client from each cluster in order to obtain a fair representation of all the digits in the model aggregations. 
We call `target' sampling this ideal FL scenario. In practice, the server cannot adopt `target' sampling as this requires to know the clients data distributions in advance. 
As we shall see in the rest of this section, the controlled nature of this example allows to clearly appreciate the practical benefits of clustered sampling.

We first show in Figure \ref{fig:illu_improved_clust} that the FL processes obtained with Algorithm  \ref{alg:clustered_basic} and \ref{alg:improved_clus_sampling} both outperform MD sampling in terms of training global loss, testing accuracy, and representativity of the sampled classes. Moreover, we note that Algorithm \ref{alg:improved_clus_sampling} converges to the same ideal performances of `target' sampling.

We also note that, with MD sampling, between 6 and 8 clients with different digits are generally chosen at each iteration round (Figure \ref{fig:illu_improved_clust}, top left panel). This is a practical demonstration of the sub-optimal representation of the clients heterogeneity.
From a statistical perspective, with MD sampling the probability of sampling 10 different clients is $p= \frac{100!}{90!100^10}\sim 63\%$. Thus, for 37\% of FL iterations, the new global model results from aggregation of less than 10 distinct clients. 
On the contrary, clustered sampling guarantees by construction that the aggregation will be always performed on 10 different clients. Indeed, since the dataset is balanced and the number of sampled clients $m=10$ is a divider of the number of clients $n=100$, every client can be allocated to one distribution only, and can be thus sampled up to once. Moreover, clustered sampling ensures that all the clients have identical aggregation weight variance.
This improved data representation translates in less convergence variability at every iteration. Figure \ref{fig:illu_improved_clust} illustrates this result by showing noticeable improvements in terms of convergence with lower variance and better performance for training loss and testing accuracy. %for Algorithm \ref{alg:clustered_basic}. 
Moreover, with Algorithm \ref{alg:improved_clus_sampling}, although at the early training steps some classes are not represented, the clustering strategy allows to quickly partition the 100 clients in 10 clusters and converge to the ideal distribution of `target' (Figure \ref{fig:illu_improved_clust}, top left). 
As a consequence of this improved representativity of clients and classes, Algorithm \ref{alg:improved_clus_sampling} is associated with smoother and faster convergence processes for training loss and testing throughout iterations. 
% All these previous considerations also hold for Algorithm \ref{alg:improved_clus_sampling} entailing the same variance reduction aspects as Algorithm \ref{alg:clustered_basic}. Regarding model similarity detection with Algorithm \ref{alg:improved_clus_sampling}, we note that at the early training steps, some clients do not or naively participate and that this is a typical behavior of MD sampling (Figure \ref{fig:illu_improved_clust}, top left panel). As such, detecting clients similarity for efficient clustering is not possible and Algorithm \ref{alg:improved_clus_sampling} gives identical performances as Algorithm \ref{alg:improved_clus_sampling}.
% `MD' and `target' samplings behavior remain identical through the learning process by always selecting respectively between 6 and 8 clients and  10 clients with different digits. However, Algorithm \ref{alg:improved_clus_sampling} allows to quickly cluster efficiently the 100 clients in 10 clusters and converges to the ideal sampling distribution of `target'.

 %\ref{alg:clustered_basic} and \ref{alg:improved_clus_sampling} compared to MD sampling. 
%We attribute such behavior to the improved clients representation. %In fact, at every iteration time, the global model is able to improve while decreasing the forgetting of the work done at previous iterations by clients that were not sampled. In Figure \ref{fig:illu_improved_clust} this is illustrated through better expected performances and lower variance.

To demonstrate the benefits of clustered sampling beyond the controlled setting of MNIST, we conduct additional experiments on CIFAR10 \cite{CIFAR-10} to investigate clustered sampling on more complex data distributions and models. CIFAR10 is composed of 32x32 images with three RGB channels of 10 different classes with 60000 samples. We use the same classifier of \cite{FedAvg} composed of 3 convolutional layers and 2 fully connected ones, including dropout after every convolutional layer.

To measure the influence of non-iid data distributions on the effectiveness of clustered sampling, we partition CIFAR10 using a Dirichlet distribution, $Dir(\alpha)$, giving to each client the respective partitioning across classes. 
The parameter $\alpha$ monitors the heterogeneity of the created dataset: $\alpha=0$ assigns one class only to every client, while $\alpha\rightarrow + \infty$ gives a uniform partitioning of classes to each client.
\cite{FL_and_CIFAR_dir} provides graphical illustration of datasets obtained with such a process, and we provide in Appendix \ref{app:experiments} similar illustrations for the parameters $\alpha\in \{0.001, 0.01, 0.1, 10\}$ considered in this work. To create an unbalanced federated dataset, we consider 100 clients where 10, 30, 30, 20 and 10 clients have respectively 100, 250, 500, 750, and 1000 training samples, and testing samples amounting to a fifth of their training size. All the clients consider a batch size of 50. For every CIFAR10 dataset partition, we report in this work experiments with learning rate in $\{0.001, 0.005, 0.01, 0.05, 0.1\}$ minimizing FedAvg with MD sampling training loss at the end of the learning process.

\begin{figure}
	\includegraphics[width=\linewidth]{./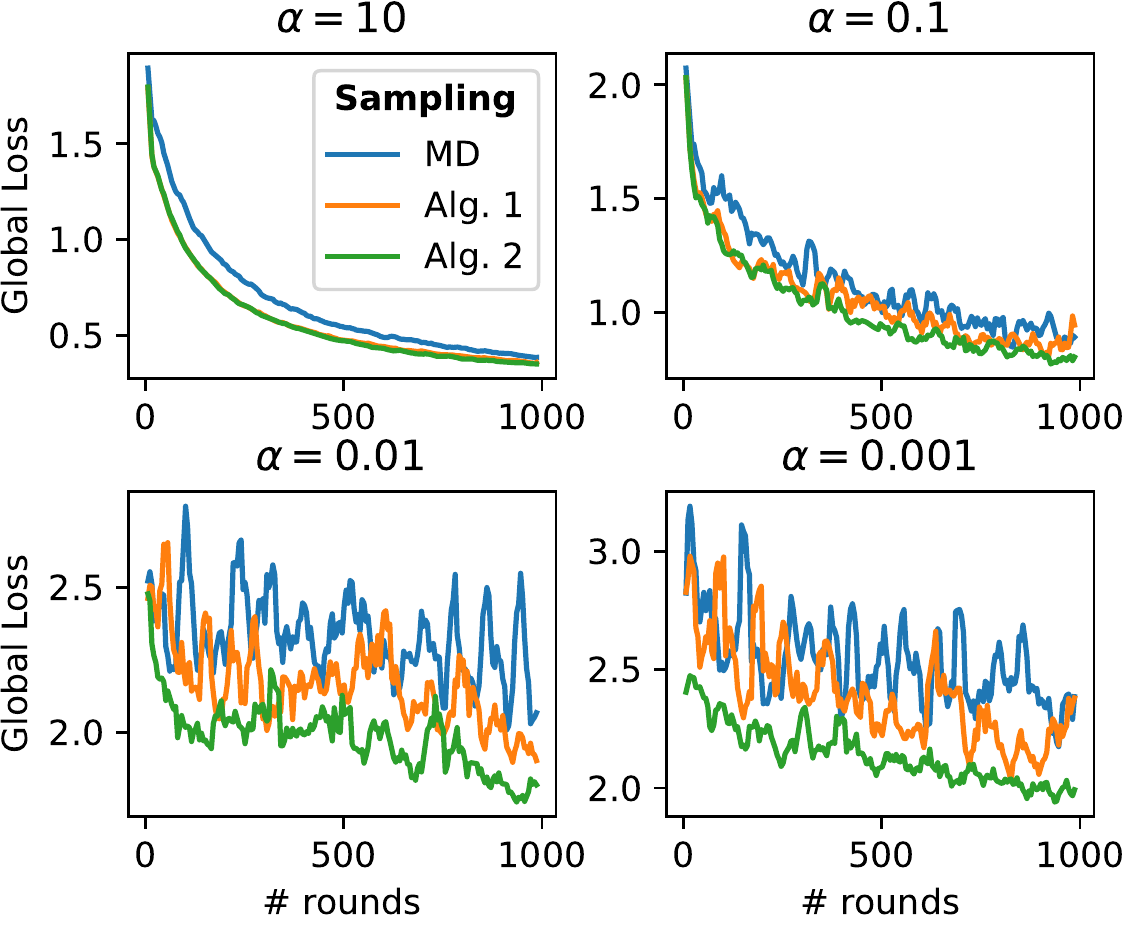}
	\caption{ We investigate the improvement provided by clustered sampling on federated unbalanced datasets partitioned from CIFAR10 using a Dirichlet distribution with parameter $\alpha\in \{0.001, 0.01, 0.1, 10\}$. We use $N=100$, $m=10$, and respective learning rate for each dataset $lr=\{0.05, 0.05, 0.05, 0.1\}$.}
	\label{fig:CIFAR_unbalanced}
\end{figure}

In Figure \ref{fig:CIFAR_unbalanced}, we show how heterogeneity determines the improvements of clustered sampling over MD sampling for any of the four datasets. We note that the more heterogeneous a dataset is, i.e. the smaller $\alpha$, the larger is the improvement provided by clustered sampling. Theorem \ref{theo:improved_clustered} shows that clustered sampling has an identical bound as MD sampling. This is retrieved for $\alpha=10.$ and $\alpha =0.1$ where the final performances for the two samplings are close with faster convergence for clustered sampling. However, with $\alpha=0.01$ and $\alpha = 0.001$, clustered sampling provides faster and better convergence. Overall, with clustered sampling, the evolution of the training loss and testing accuracy are smoother processes than with MD sampling.

For sake of clarity, we note that the training losses displayed in Figure \ref{fig:CIFAR_unbalanced} is computed as the rolling mean over 50 iterations, while we provide in Appendix \ref{app:experiments} the original training loss evolution.
Furthermore, Appendix \ref{app:experiments} reports a larger panel of experiments providing additional verification of the improvements brought by clustered sampling. In Figure \ref{fig:illu_improved_clust} and \ref{fig:CIFAR_unbalanced}, Algorithm \ref{alg:improved_clus_sampling} is computed with Arccos similarity. We show in Appendix \ref{app:experiments} that with L2 and L1 we get similar improvements. We also show that increasing the amount of local work $N$ enables clients to update models fitting better their data distribution. As a result, measuring clients similarity is easier, enabling better clustering, and leading to better performances. We also show that for any amount of sampled clients clustered sampling improves MD sampling. Finally, it is worth noticing that in none of the experimental settings considered for this paper clustered sampling underperformed with respect to MD sampling, providing further experimental evidence for our theoretical results.

\section{Discussion and conclusion}
In this work, we introduced clustered sampling, a novel client selection scheme in FL generalizing MD sampling, the current scheme from the state-of-the-art. We proved the correctness of clustered sampling and proposed two clustering methods implementing aggregation based on the clients number of samples, in Algorithm \ref{alg:clustered_basic}, or model similarity, in Algorithm \ref{alg:improved_clus_sampling}. Both algorithms provide smaller weight variance for the clients aggregation process leading to better client representativity. Consistently, clustered sampling is experimentally shown to have faster and smoother convergence in heterogeneous dataset.

The generality of clustered sampling paves the way to further investigation of clients clustering methods based on different criteria than clients sample size or model similarity. To the best of our knowledge, this work is also the first one introducing model similarity detection when sampling clients, as opposed to current approaches considering all clients at every iteration. 

Finally, clustered sampling is an unbiased sampling scheme simple to implement, while not requiring to modify neither server nor clients behavior during FL training. This aspects makes clustered sampling readily compatible with existing methods and technologies for privacy enhancement and communication reduction.

\subsection*{Acknowledgments and Disclosure of Funding}
\label{sec:ack}

This work has been supported by the French government, through the 3IA Côte d’Azur Investments in the Future project managed by the National Research Agency (ANR) with the reference number ANR-19-P3IA-0002, and by the ANR JCJC project Fed-BioMed 19-CE45-0006-01. The project was also supported by Accenture.
% and the Inria Sophia Antipolis - Méditerranée, “NEF" computation cluster.
The authors are grateful to the OPAL infrastructure from Université Côte d'Azur for providing resources and support.

\typeout{}
%\bibliography{./tex/biblio.bib}
 \bibliography{main.bbl}
\bibliographystyle{./icml2020}

\newpage
\appendix

\section{Proof of Theorem \ref{theo:convergence_clustered}}
\label{app:proof_theo_2}

Theoretical guarantees regarding the convergence of FedAvg were given in \cite{FedNova}. The proof relies on Assumptions \ref{ass:smoothness} to \ref{ass:dissimilarity}.
The full proof is provided in \cite{FedNova} for MD sampling where the MD sampling is shown to satisfy Lemma \ref{lem:sampling_conditions}. In Section \ref{proof:FedNove}, we reproduce the proof provided in \cite{FedNova} for Lemma \ref{lem:sampling_conditions} and, in Section \ref{proof:clustered}, we show that clustered sampling satisfying Proposition \ref{prop:equivalent} also satisfies Lemma \ref{lem:sampling_conditions}. As a result, FedAvg when sampling clients with MD or clustered sampling has identical asymptotic behavior. 

\begin{lemma}\label{lem:sampling_conditions}
	Suppose we are given $z_1, z_2, ..., z_n, x \in \mathbb{R}^d$. Let $l_1, l_2, ..., l_m$ be the index of the sampled clients and $S$ be the set of sampled clients. We have 
	\begin{align}\label{eq:proof_FN_1}
		\EE{S}{\frac{1}{m}\sum_{j=1}^{m}z_{l_j}}
		= \sum_{i=1}^{n}p_i z_i ,
	\end{align}
	and
	\begin{align}\label{eq:proof_FN_2}
		\EE{S}{\norm{\frac{1}{m}\sum_{j=1}^{m}z_{l_j}}^2}
		\le 3\sum_{i=1}^{n}p_i \norm{z_i - \nabla \Lcal_i(x)}^2 \nonumber\\ 
		+ 3 \norm{\nabla \Lcal(x)}^2 
		+ \frac{3}{m}(\beta^2 \norm{\nabla \Lcal(x)}^2 + \kappa^2) .
	\end{align}
\end{lemma}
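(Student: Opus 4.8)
The plan is to establish the two assertions separately; (\ref{eq:proof_FN_1}) is immediate and (\ref{eq:proof_FN_2}) follows from a three-way split of each sampled term. For (\ref{eq:proof_FN_1}), I would use that under MD sampling the indices $l_1,\dots,l_m$ are drawn i.i.d.\ from $W_0$ with $\mathbf{P}(l_j=i)=p_i$, so by linearity of expectation $\EE{S}{\frac{1}{m}\sum_{j=1}^m z_{l_j}} = \frac{1}{m}\sum_{j=1}^m\sum_{i=1}^n p_i z_i = \sum_{i=1}^n p_i z_i$.

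For the second-moment bound, the idea is to center each summand first around its own client gradient and then around the global gradient $\nabla\Lcal(x)=\sum_i p_i\nabla\Lcal_i(x)$. I would write
\[
\frac{1}{m}\sum_{j=1}^m z_{l_j}
= \underbrace{\frac{1}{m}\sum_{j=1}^m\bigl(z_{l_j}-\nabla\Lcal_{l_j}(x)\bigr)}_{A}
+ \underbrace{\frac{1}{m}\sum_{j=1}^m\bigl(\nabla\Lcal_{l_j}(x)-\nabla\Lcal(x)\bigr)}_{B}
+ \underbrace{\nabla\Lcal(x)}_{C},
\]
and apply $\norm{A+B+C}^2\le 3(\norm{A}^2+\norm{B}^2+\norm{C}^2)$, which accounts for the factor $3$ appearing throughout the \rhs.

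Next I would bound the three resulting expectations. The term $C$ is deterministic and yields exactly $3\norm{\nabla\Lcal(x)}^2$. For $A$, convexity of $\norm{\cdot}^2$ gives $\norm{A}^2\le\frac{1}{m}\sum_{j=1}^m\norm{z_{l_j}-\nabla\Lcal_{l_j}(x)}^2$, and since $\E{\norm{z_{l_j}-\nabla\Lcal_{l_j}(x)}^2}=\sum_i p_i\norm{z_i-\nabla\Lcal_i(x)}^2$ this produces the first \rhs\ term $3\sum_i p_i\norm{z_i-\nabla\Lcal_i(x)}^2$. The decisive term is $B$: because $\E{\nabla\Lcal_{l_j}(x)}=\nabla\Lcal(x)$, the summands are i.i.d.\ and zero-mean, so the cross terms vanish and $\E{\norm{B}^2}=\frac{1}{m}\sum_i p_i\norm{\nabla\Lcal_i(x)-\nabla\Lcal(x)}^2$. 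I would then use the variance identity $\sum_i p_i\norm{\nabla\Lcal_i(x)-\nabla\Lcal(x)}^2=\sum_i p_i\norm{\nabla\Lcal_i(x)}^2-\norm{\nabla\Lcal(x)}^2$ together with Assumption \ref{ass:dissimilarity} applied with weights $w_i=p_i$ to bound this by $\beta^2\norm{\nabla\Lcal(x)}^2+\kappa^2$, giving the last \rhs\ term $\frac{3}{m}(\beta^2\norm{\nabla\Lcal(x)}^2+\kappa^2)$.

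The delicate point is the choice of decomposition: exactly one piece ($B$) must be zero-mean, since it is the cancellation of cross terms of i.i.d.\ zero-mean vectors that produces the variance-reducing factor $1/m$, whereas the genuine stochastic-gradient noise in $A$ does not average out and must be controlled by the cruder Jensen bound. I note that this argument uses only the independence of the draws and the marginals $\mathbf{P}(l_j=i)=p_i$; hence in Section \ref{proof:clustered} it will remain to show that clustered sampling satisfying Proposition \ref{prop:equivalent} still satisfies the lemma, where independence of the $m$ draws again kills the cross terms while (\ref{eq:total_probability}) restores the correct aggregate marginals.
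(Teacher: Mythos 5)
Your proof is correct and follows essentially the same route as the paper: the identical three-term decomposition $A+B+C$ with the Jensen factor of $3$, the crude Jensen bound on the stochastic-noise term $A$, and the i.i.d.\ zero-mean cancellation of cross terms in $B$ followed by the variance identity and Assumption \ref{ass:dissimilarity} with $w_i=p_i$. Your closing remark about which properties carry over to clustered sampling also matches how the paper organizes the extension in Section \ref{proof:clustered}.
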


%FedProx is shown to follow Lemma \ref{lem:sampling_conditions}. In section \ref{proof:FedNove}, we show that FedProx satisfies Lemma \ref{lem:sampling_conditions} like in \cite{FedNova} and in section \ref{proof:c1}, \ref{proof:clustered}, and \ref{proof:c3}, we respectively show that clustered sampling I, II and III also satisfy Lemma \ref{lem:sampling_conditions}.

\subsection{Proof of Lemma \ref{lem:sampling_conditions} for Theorem \ref{theo:convergence_FP} adapted from \cite{FedNova}}\label{proof:FedNove}
\begin{proof}
    Clients are selected with MD sampling. 
    We denote by $l_1, l_2, ..., l_m$ the $m$ indices of the sampled clients which are iid sampled from a multinomial distribution supported on $\{1, ..., n\}$ satisfying $\mathbb{P}(l_x = i) = p_i$ and $\sum_{i=1}^{n}p_i = 1$.
	
	By definition, MD sampling satisfies equation (\ref{eq:proof_FN_1}).
	
	Regarding equation (\ref{eq:proof_FN_2}), we have:
	\begin{align}
	\frac{1}{m}\sum_{j=1}^{m}z_{l_j}
	&= \left(\frac{1}{m}\sum_{j=1}^{m}z_{l_j} - \frac{1}{m}\sum_{j=1}\nabla \Lcal_{l_j}(x)\right) \nonumber\\
	&+ \left(\frac{1}{m}\sum_{j=1}^{m} \Lcal_{l_j}(x) - \nabla \Lcal(x)\right) + \nabla \Lcal(x) .
	\end{align}
	Using the Jensen inequality on the $\norm{\cdot}^2$ operator, we get: 
	\begin{align}
	\E{\norm{\frac{1}{m}\sum_{j=1}^{m}z_{l_j}}^2}
	&\le 3\E{\norm{\frac{1}{m}\sum_{j=1}^{m}(z_{l_j} - \nabla \Lcal_{l_j}(x))}^2} \nonumber\\
	&+ 3\E{\norm{\frac{1}{m}\sum_{j=1}^{m} \nabla \Lcal_{l_j}(x) - \nabla \Lcal(x)}^2} \nonumber\\
	& + 3 \norm{\nabla \Lcal(x)}^2 \label{app:eq:FedNova_three_terms}
	\end{align}	
	Using the Jensen inequality, we get the following upper bound for the first term:
	\begin{align}
		&\E{\norm{\frac{1}{m}\sum_{j=1}^{m}(z_{l_j} - \nabla \Lcal_{l_j}(x))}^2} \nonumber\\
		&\le \E{\frac{1}{m}\sum_{j=1}^{m}\norm{z_{l_j} - \nabla \Lcal_{l_j}(x)}^2}\\
		&= \sum_{i=1}^{n} p_i \norm{z_i - \nabla \Lcal_i(x)}^2, \label{app:eq:FedNova_first_term}
	\end{align}
	where the equality follows from equation (\ref{eq:proof_FN_1}).
	
	By definition, MD sampling is unbiased, i.e. $\E{\nabla \Lcal_{l_j} (x)} = \nabla \Lcal(x)$. Therefore, we get the following upper bound for the second term:
	\begin{align}
		&\E{\norm{\frac{1}{m}\sum_{j=1}^{m} \nabla \Lcal_{l_j}(x) - \nabla \Lcal(x)}^2} \nonumber\\
		&= \E{\frac{1}{m^2}\sum_{j=1}^{m}\norm{ \nabla \Lcal_{l_j}(x) - \nabla \Lcal(x)}^2}\\
		&= \frac{1}{m}\sum_{i=1}^{n}  p_i {\norm{\nabla \Lcal_i(x) - \nabla \Lcal(x)}^2}\\
		&= \frac{1}{m}\sum_{i=1}^{n}  p_i \norm{\nabla \Lcal_i(x)}^2 - \frac{1}{m}\norm{\nabla \Lcal(x)}^2 \label{app:eq:MD_equality}\\
% 		= \frac{1}{m} \left[ \sum_{i=1}^{n}  p_i \norm{\nabla \Lcal_i(x)}^2 + \norm{\nabla \Lcal(x)}^2 - 2 <\sum_{i=1}^{n}  p_i \nabla \Lcal_i(x), \nabla \Lcal(x)>\right]\\
		&\le \frac{1}{m}[(\beta^2 - 1)\norm{\nabla \Lcal(x)}^2+ \kappa^2]\\
		&\le \frac{1}{m}[\beta^2 \norm{\nabla \Lcal(x)}^2+ \kappa^2] , \label{app:eq:FedNova_second_term}
	\end{align}
	where the first inequality comes from using Assumption \ref{ass:dissimilarity}.
	
	Finally, substituting equation (\ref{app:eq:FedNova_first_term}) and (\ref{app:eq:FedNova_second_term}) in equation (\ref{app:eq:FedNova_three_terms}) completes the proof.
	
\end{proof}

\subsection{ Proof of Lemma \ref{lem:sampling_conditions} for Theorem \ref{theo:convergence_clustered}}\label{proof:clustered}
\begin{proof}
    Clients are selected with clustered sampling. 
	The $m$ clients indices $l_1, l_2, ..., l_m$ are still independently sampled but no longer identically. Each index $l_k$ is sampled from a distribution $W_k$. Each client can be sampled with probability $\mathbb{P}(l_k = i) = r_{k, i}$. 
% 	Due to Assumption \ref{ass:sampling}, the $r_{k, i}$ are such that equation \ref{eq:sum_proba} and \ref{eq:total_probability} are satisfied.

    Clustered sampling follows Proposition \ref{prop:equivalent} and therefore satisfies equation (\ref{eq:proof_FN_1}).
    
    Equation (\ref{app:eq:FedNova_three_terms}) holds for any sampling schemes. Therefore, we also use it to prove equation (\ref{eq:proof_FN_2}) for clustered sampling. 
    Using the same steps as for the proof of Lemma \ref{lem:sampling_conditions} for MD sampling, we bound the first term of equation (\ref{app:eq:FedNova_three_terms}) as:
	\begin{align}
	&\E{\norm{\frac{1}{m}\sum_{j=1}^{m}(z_{l_j} - \nabla \Lcal_{l_j}(x))}^2} \nonumber\\
	&\le \sum_{i=1}^{n} p_i \norm{z_i - \nabla \Lcal_i(x)}^2.	\label{app:eq:ours_first_term}
	\end{align}
	Before bounding the second term, we define $\nabla \Lcal_{W_k}(x)$ as the expected gradient of the distribution $W_k$ with respects to the parameters $x$, i.e.
	\begin{equation}\label{eq:def_expected_grad_distri}
	\nabla \Lcal_{W_k}(x)
	\coloneqq \EE{l_k \sim W_k}{\nabla \Lcal_{l_k}(x)}
	= \sum_{i=1}^{n}r_{k,i} \nabla \Lcal_i(x)
	\end{equation}

	Using this definition, we bound the second term as
	\begin{align}
% 	\begin{aligned}
	&\E{\norm{\frac{1}{m}\sum_{k=1}^{m} \nabla \Lcal_{l_k}(x) - \nabla \Lcal(x)}^2} \nonumber\\
	&= \E{\norm{\frac{1}{m}\sum_{k=1}^{m} (\nabla \Lcal_{l_k}(x) - \nabla \Lcal_{W_k}(x))}^2} \\
	&= \frac{1}{m^2}\sum_{k=1}^{m}  \E{\norm{\nabla \Lcal_{l_k}(x) - \nabla \Lcal_{W_k}(x)}^2}\label{app:eq:cos_1}\\
	&= \frac{1}{m^2}\sum_{k=1}^{m}  \sum_{i=1}^{n} r_{k, i}\norm{\nabla \Lcal_i(x) - \nabla \Lcal_{W_k}(x)}^2\\
	&= \frac{1}{m^2}[\sum_{i=1}^{n} m p_i \norm{\nabla \Lcal_i(x)}^2 - \sum_{k=1}^{m}\norm{\nabla \Lcal_{W_k}(x)}^2]\label{app:eq:cos_2}\\
% 	= \frac{1}{m^2}[\sum_{i=1}^{n} m p_i \norm{\nabla \Lcal_i(x)}^2 + \sum_{k=1}^{m}\norm{\nabla \Lcal_{W_k}(x)}^2\nonumber\\
% 	- 2 \sum_{k=1}^m<\sum_{i=1}^n r_{k,i} \nabla \Lcal_i(x), \nabla \Lcal_{W_k}(x)>]\\
	&\le \frac{1}{m}[\beta^2 \norm{\nabla \Lcal(x)}^2+ \kappa^2],
% 	\end{aligned}
	\label{app:eq:ours_second_term}
	\end{align}
	where the last inequality comes from using Assumption \ref{ass:dissimilarity} and equation (\ref{app:eq:cos_1}) and (\ref{app:eq:cos_2}) are obtained with equation (\ref{eq:def_expected_grad_distri}).
	
	Finally, substituting equation (\ref{app:eq:ours_first_term}) and (\ref{app:eq:ours_second_term}) in equation (\ref{app:eq:FedNova_three_terms}) completes the proof.
	
\end{proof}

Equation (\ref{app:eq:MD_equality}) and (\ref{app:eq:cos_2}) allow us to theoretically identify the convergence improvement of clustered sampling over MD sampling. 

We define by $B_{MD} 
= \frac{1}{m}\sum_{i=1}^np_i \norm{\nabla \mathcal{L}_i(x)}^2 
- \frac{1}{m} \norm{\nabla \mathcal{L}(x)}^2$, 
equation (\ref{app:eq:MD_equality}), and 
$B_{Cl} 
= \frac{1}{m}\sum_{i=1}^np_i \norm{\nabla \mathcal{L}_i(x)}^2 
- \frac{1}{m^2}\sum_{k=1}^m\norm{\nabla \mathcal{L}_{W_k}(x)}^2$, equation (\ref{app:eq:cos_2}). 
Using the Jensen inequality, we get 
\begin{align}
- \sum_{k=1}^m \frac{1}{m^2} \norm{\nabla \mathcal{L}_{W_k}(x)}^2 
&\le -  \frac{1}{m}\norm{ \sum_{k=1}^m \frac{1}{m} \nabla \mathcal{L}_{W_k}(x)}^2 \\
&= -  \frac{1}{m}\norm{\nabla \mathcal{L}(x)}^2
\end{align}
with equality if and only if $\forall k, l,\ \nabla \Lcal_{W_k}(x) = \nabla \Lcal_{W_l}(x)$. Thus, $B_{Cl} \leq B_{MD} $ with equality if and only if all the clients have the same data distribution or the considered clustered sampling is MD sampling.

\section{MD and clustered sampling comparison}
\label{app:stat_measure_clust_2}

\subsection{Client aggregation weight variance}

As in Section \ref{sec:clustered_sampling}, we denote by $S_{MD}$ and $S_C(t)$ the random variables associated respectively to MD and clustered sampling. Also in Section \ref{sec:clustered_sampling}, we have shown that 
\begin{equation}
\VARR{S_{MD}}{\omega_i(S_{MD})} 
= \frac{1}{m^2}m p_i (1-p_i),
\end{equation}
and 
\begin{equation}
\VARR{S_C(t)}{\omega_i(S_C(t)} 
= \frac{1}{m^2}\sum_{k=1}^{m} r_{k,i}^t (1- r_{k,i}^t)
.
\end{equation}
Hence, we get:
\begin{align}
\VARR{S_{MD}}{\omega_i(S_{MD})}  - \VARR{S_C(t)}{\omega_i(S_C(t)}  \\
= \frac{1}{m^2}[m p_i(1 -p_i) - \sum_{k=1}^{m}r_{k, i}^t(1-r_{k, i}^t)]
\end{align}
We consider an unbiased clustered sampling. Therefore, the sum of probability for client $i$ over the $m$ clusters satisfies $\sum_{k=1}^{m} r_{k, i}^t = m p_i$ giving:
\begin{align}
&\VARR{S_{MD}}{\omega_i(S_{MD})}  - \VARR{S_C(t)}{\omega_i(S_C(t)}\\
& = \frac{1}{m^2}[\sum_{k=1}^{m}{r_{k,i}^t}^2 - m p_i^2 ]
\end{align}
Using the Cauchy-Schwartz inequality, we get: $\sum_{k=1}^{m}{r_{k,i}^t}^2 \times \sum_{k=1}^{m}1^2 \ge \left(\sum_{k=1}^{m}r_{k, i}^t \times 1 \right)^2 = (mp_i)^2$ due to the unbiased aspect of the considered clustered sampling. As such, we get:
\begin{equation}
\VARR{S_{MD}}{\omega_i(S_{MD})}  - \VARR{S_C(t)}{\omega_i(S_C(t)}  
% \ge \frac{1}{m^2}[\frac{1}{m}m^2p_i^2 - m p_i^2 ]
\ge 0,
\end{equation}
with equality if and only if $r_{k, i}^t = p_i$.

\subsection{Probability for a client to be sampled at least once}

In Section \ref{sec:clustered_sampling}, we have shown that 
\begin{equation}
p(\{i \in S_{MD}\})
= 1 - (1-p_i)^m
\end{equation}
and 
\begin{equation}
p(\{i \in S_C(t)\})
= 1 - \prod_{k=1}^m (1- r_{k,i}^t)
.
\end{equation}
Hence, we get:
\begin{align}
&p(\{i \in S_{MD}\}) - p(\{i \in S_C(t)\})\\
% &= [1 - (1- p_i)^m] -[1 - \prod_{k=1}^m (1- r_{k, i}^t)]\\
&= \prod_{k=1}^m (1- r_{k, i}^t) - (1- p_i)^m 
% \le 0
\end{align}
We consider an unbiased clustered sampling. Therefore, when using the inequality of arithmetic and geometric means, we get:
\begin{equation}
\prod_{k=1}^m (1- r_{k, i}^t)
\le \left(\frac{\sum_{k=1}^{m}(1-r_{k, i}^t)}{m}\right)^m= (1-p_i)^m,
\end{equation}
with equality if and only if $r_{k, i}^t = p_i$. Finally, we get:
\begin{equation}
p(\{i \in S_{MD}\}) 
\ge
p(\{i \in S_C(t)\})
\end{equation}

\section{Explaining Algorithm \ref{alg:clustered_basic} and \ref{alg:improved_clus_sampling}}\label{app:algs_proofs}

Algorithms \ref{alg:clustered_basic} and \ref{alg:improved_clus_sampling} can be written in term of data ratio $p_i$ instead of samples number $n_i$. While in both cases the algorithms would be correct, it turns out to be simpler to work with quantities of samples $n_i = p_i M$ instead which are integers.
% However, both algorithms relies on considering clients and distributions one after the other, and a client or a distribution is no longer considered when either equation (\ref{eq:sum_proba}) or (\ref{eq:total_probability}) is satisfied. While in both cases the algorithms would be theoretically correct,  with $p_i$, checking if equation (\ref{eq:sum_proba}) and (\ref{eq:total_probability}) holds implies asserting equalities between floats and would therefore not be feasible without an approximation or major slow down. As a result, distributions in Algorithms \ref{alg:clustered_basic} and \ref{alg:improved_clus_sampling} are obtained by allocating every client number of samples $n_i = p_i M$ instead which are integers. 
Therefore, without loss of generality, we denote by $r_{k, i}'$ the number of samples allocated by client $i$ to distribution $k$. We retrieve the sampling probability of client $i$ in distribution $W_k$ with $r_{k,i} = \frac{r_{k, i}'}{M}$.

Also, without loss of generality, we prove Algorithms \ref{alg:clustered_basic} and \ref{alg:improved_clus_sampling} at iteration $t$ and therefore we use in the proofs $r_{k, i}$ and $W_k$ instead of $r_{k,i}^t$ and $W_k^t$.

\subsection{Algorithm \ref{alg:clustered_basic}}

\begin{figure*}
\newcommand\length{4}
\newcommand\height{1.}
\newcommand\offset{0.5}
\newcommand\xr{ - \length/2}
\newcommand\xs{ \length/3}
\newcommand\xt{3*\length/4}
\newcommand\xu{2*\length/3 +\length}
\begin{center}
\begin{tikzpicture}
	
	    %THREE RECTANGLES FOR THE wk
		\node[coordinate] (A) at (-\length, 0){} ;
		\node[coordinate] (B) at (0, 0){} ;
		\node[coordinate] (C) at (\length, 0){} ;
		\node[coordinate] (D) at (2* \length, 0){} ;
		
		\node[coordinate] (H) at (2 * \length, \height){} ;
		\node[coordinate] (G) at (\length, \height){} ;
		\node[coordinate] (F) at (0, \height){} ;
		\node[coordinate] (E) at (-\length, \height){} ;
		
		\draw (A) rectangle (F) node[midway] {$W_{k-1}$};
		\draw (B) rectangle (G) node[midway] {$W_{k}$};
		\draw (C) rectangle (H) node[midway] {$W_{k+1}$};

		%RECTANGLES AT THE LEFT AND RIGHT EXTREME OF THE PLOT
		\node[coordinate] (L1) at (-2.5 * \length, \height){} ;
		\node[coordinate] (L2) at ( -1.5 * \length, \height){} ;
		\node[coordinate] (L3) at (-2.5 * \length, 0){} ;
		\node[coordinate] (L4) at (-1.5 * \length, 0){} ;
		
% 		\draw (L3) rectangle (L2) node[midway] {$W_{1}$};
		\draw (L4) rectangle (E) node[midway] {$...$};
		
		\node[coordinate] (R1) at ( 2.5 *\length, \height){} ;
		\node[coordinate] (R2) at (3.5 * \length, \height){} ;
		\node[coordinate] (R3) at (2.5 *\length, 0){} ;
		\node[coordinate] (R4) at (3.5* \length, 0){} ;
		
		\draw (D) rectangle (R1) node[midway] {$...$};
% 		\draw (R3) rectangle (R2) node[midway] {$W_m$};

		%PLOT THE ARROWS FOR THE NUMBER OF SAMPLES PER CLIENT
		\node[coordinate] (R) at ( \xr, \height + \offset){} ;
		\node[coordinate] (S) at (\xs,  \height + \offset){} ;
		\node[coordinate] (T) at (\xt,  \height + \offset){} ;
		\node[coordinate] (U) at (\xu,  \height + \offset){} ;
		
		\draw[>=latex,<->] (R) -- (S) node[midway, above]{\footnotesize $m n_{i-1}$};
        \draw[>=latex,<->] (S) -- (T) node[midway,above]{\footnotesize $m n_{i}$};
        \draw[>=latex,<->] (T) -- (U) node[midway,above] {\footnotesize $m n_{i+1}$};
        
        %PLOT THE ARROWS FOR THE NUMBER OF SAMPLES PER CLUSTER
		\node[coordinate] (B') at (0, 0-\offset){} ;
		\node[coordinate] (C') at (\length, -\offset){} ;
		\node[coordinate] (R') at ( \xr, - \offset){} ;
		\node[coordinate] (S') at (\xs,  - \offset){} ;
		\node[coordinate] (T') at (\xt,  - \offset){} ;
		\node[coordinate] (U') at (\xu, - \offset){} ;
		
        \draw[>=latex,<->] (R') -- (B') node[midway,below]{\footnotesize $r_{k-1, i-1}$};
        \draw[>=latex,<->] (B') -- (S') node[midway, below]{\footnotesize $r_{k, i-1}$};
        \draw[>=latex,<->] (S') -- (T') node[midway,below]{\footnotesize $r_{k, i}$};
        \draw[>=latex,<->] (T') -- (C') node[midway,below] {\footnotesize $r_{k, i+1}$};
        \draw[>=latex,<->] (C') -- (U') node[midway,below]{\footnotesize $r_{k+1, i+1}$};
        
        %PLOT THE TOTAL NUMBER OF SAMPLES PER CLUSTER
        \node[coordinate] (A'') at (-\length, - 2 *\offset){} ;
		\node[coordinate] (B'') at (0, - 2 *\offset){} ;
		\node[coordinate] (C'') at (\length, - 2 *\offset){} ;
		\node[coordinate] (D'') at (2* \length, - 2 *\offset){} ;
		
		\draw[>=latex,<->] (A'') -- (B'') node[midway,below]{\footnotesize $M$};
        \draw[>=latex,<->] (B'') -- (C'') node[midway, below]{\footnotesize $M$};
        \draw[>=latex,<->] (C'') -- (D'') node[midway,below]{\footnotesize $M$};
        
        %PLOT THE TOTAL NUMBER OF SAMPLES FOR ALL THE CLIENTS
        \node[coordinate] (begin) at (-1.5 * \length, \height+ 2*\offset){} ;
		\node[coordinate] (end) at ( 2.5 * \length, \height + 2 * \offset){} ;
		
		\draw[>=latex,<->] (begin) -- (end) node[midway,above]{\footnotesize $\sum_{i=1}^n m\ n_i = m\ M$};

\end{tikzpicture}
\end{center}
\caption{Illustration of the clients allocation scheme of Algorithm \ref{alg:clustered_basic}.  Clients are considered in decreasing importance of their number of samples and always allocate client samples to distributions that already received samples but do not yet have $M$ of them. As a result, after allocating a client, all distributions except at most one have 0 or $M$ samples. Client $i$ is only sampled in $W_k$ because every distribution with index inferior to $k$ are filled with clients of index inferior to $i$, and because there is enough room in $W_k$ to receive all the samples that need to be allocated for client $i$.}\label{app:fig:alg_basic}

\end{figure*}
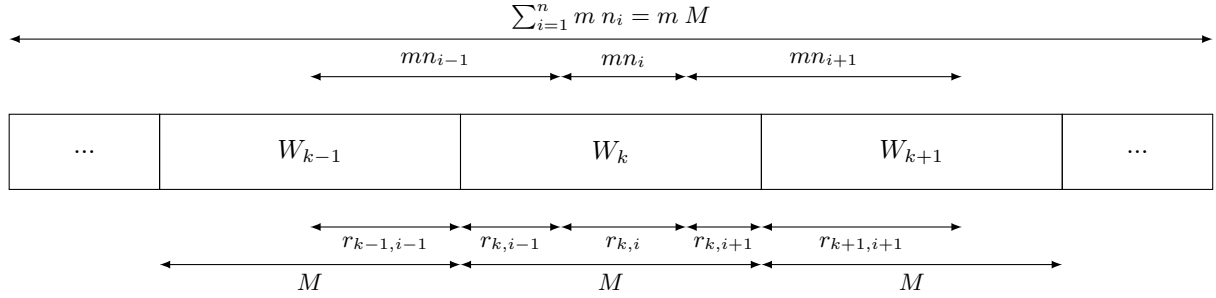

We illustrate in Figure \ref{app:fig:alg_basic} the clients allocation scheme of Algorithm \ref{alg:clustered_basic} introduced in Section \ref{sec:basic_clustered}, by considering how a client $i$ is associated to the $m$ distributions. Theorem \ref{theo:basic_clustered} states that Algorithm \ref{alg:clustered_basic} provides a sampling scheme satisfying Proposition \ref{prop:equivalent} with complexity $\Ocal(n \log(n))$ which we prove in Section \ref{sec:basic_clustered} and in the following proof.

\begin{proof}
    % Figure \ref{app:fig:alg_basic} illustrates how the $m$ distributions are obtained with Algorithm \ref{alg:clustered_basic}. By construction, we get for every client that 
    % \begin{equation}
    %     \sum_{k=1}^m r_{k, i} 
    %     = \frac{\sum_{k=1}^m r_{k, i}'}{M}
    %     = \frac{m n_i}{M}
    %     = m p_i
    %     .
    % \end{equation}
    % Thus every client satisfies equation (\ref{eq:total_probability}). Also, by construction, every distribution satisfies:
    % \begin{equation}
    %     \sum_{i=1}^n r_{k, i} 
    %     = \frac{\sum_{i=1}^n r_{k, i}'}{M}
    %     = \frac{M}{M}
    %     = 1
    %     .
    % \end{equation}
    % Thus every distribution satisfies equation (\ref{eq:sum_proba}).
    % % $r_{k, i}$, the probability client $i$ is sampled in $W_k$, either exhausts $p_i$ by setting it to 0 enabling client $i$ to satisfy equation \ref{eq:total_probability} or fills completely $W_k$ with $q_k = 1$ enabling $W_k$ to satisfy equation \ref{eq:sum_proba}. 
    % % As the client's data ratio $p_i$ sums to 1 and are amplified $m$ times to fit in $m$ distributions, 
    
    % Finally, equation (\ref{eq:sum_proba}) is satisfied for every distribution and equation (\ref{eq:total_probability}) is satisfied for every client. Therefore the clustered sampling provided by Algorithm \ref{alg:clustered_basic} satisfies Proposition \ref{prop:equivalent}.
	
	In term of complexity, the while loop for the client allocation, as illustrated in Figure \ref{app:fig:alg_basic}, either change client or distribution at every step and is thus done in complexity $\Ocal(n+m)$. Sampling client is relevant if $m<n$. Therefore the allocation complexity is equivalent to $\Ocal(n + m) = \Ocal(n)$. Also, sorting $n$ elements is done in complexity $\Ocal(n\log(n))$. Therefore, Algorithm \ref{alg:clustered_basic} overall complexity is $\Ocal(n\log(n)$.
\end{proof}

\subsection{Algorithm \ref{alg:improved_clus_sampling}}

\begin{figure*}
\newcommand\Offset{0.4}
\newcommand\Height{1.}
\newcommand\unit{0.45}
\newcommand\M{6 *\unit}
\newcommand\Between{4 *\Offset}

\newcommand\Xa{ 0}
\newcommand\Xaa{ \Xa + \M}
\newcommand\Xb{ \Xaa + 5 *\unit}
\newcommand\Xc{\Xb + 5 *\unit}
\newcommand\Xd{\Xc + 5 *\unit}
\newcommand\Xdd{\Xd + 4 *\unit}
\newcommand\Xe{ \Xdd + 3*\unit}
\newcommand\Xf{ \Xe + 2 *\unit}
\newcommand\Xg{\Xf +2*\unit}
\newcommand\Xh{\Xg + 2*\unit}
\newcommand\Xj{ \Xh + \unit}

\newcommand\Xs{ \Xa + \M}
\newcommand\Xss{ \Xs + \M}
\newcommand\Xt{\Xss + \M}
\newcommand\Xtt{\Xt + \M}
\newcommand\Xu{\Xtt + 5 *\unit}
\newcommand\Xv{ \Xu + \M}
\begin{center}
\begin{tikzpicture}
	
	    %RECTANGLES FOR THE Bk
		\node[coordinate] (A) at (\Xa, \Height+ \Between){} ;
		\node[coordinate] (AA) at (\Xaa, \Height+ \Between){} ;
		\node[coordinate] (B) at (\Xb, \Height+ \Between){} ;
		\node[coordinate] (C) at (\Xc, \Height+ \Between){} ;
		\node[coordinate] (D) at (\Xd, \Height+ \Between){} ;
		\node[coordinate] (DD) at (\Xdd, \Height+ \Between){} ;
		\node[coordinate] (E) at (\Xe, \Height+ \Between){} ;
		\node[coordinate] (F) at (\Xf, \Height+ \Between){} ;
		\node[coordinate] (G) at (\Xg, \Height+ \Between){} ;
		\node[coordinate] (H) at (\Xh, \Height+ \Between){} ;
		\node[coordinate] (I) at (\Xj, \Height+ \Between){} ;
		
		\node[coordinate] (A') at (\Xa, 2 *\Height+ \Between){} ;
		\node[coordinate] (AA') at (\Xaa, 2 *\Height+ \Between){} ;
		\node[coordinate] (B') at (\Xb, 2 *\Height+ \Between){} ;
		\node[coordinate] (C') at (\Xc, 2 *\Height+ \Between){} ;
		\node[coordinate] (D') at (\Xd, 2 *\Height+ \Between){} ;
		\node[coordinate] (DD') at (\Xdd, 2* \Height+ \Between){} ;
		\node[coordinate] (E') at (\Xe, 2 *\Height+ \Between){} ;
		\node[coordinate] (F') at (\Xf, 2 *\Height+ \Between){} ;
		\node[coordinate] (G') at (\Xg, 2 *\Height+ \Between){} ;
		\node[coordinate] (H') at (\Xh, 2 *\Height+ \Between){} ;
		\node[coordinate] (I') at (\Xj, 2 *\Height+ \Between){} ;
		
		\draw (A) rectangle (AA') node[midway] {$B_1$};
		\draw (AA) rectangle (B') node[midway] {$B_2$};
		\draw (B) rectangle (C') node[midway] {$B_3$};
		\draw (C) rectangle (D') node[midway] {$B_4$};
		\draw (D) rectangle (DD') node[midway] {$...$};
		\draw (DD) rectangle (E') node[midway] {$B_m$};
		\draw (E) rectangle (F') node[midway] {$B_{m+1}$};
		\draw (F) rectangle (G') node[midway] {$B_{m+2}$};
		\draw (G) rectangle (H') node[midway] {$...$};
		\draw (H) rectangle (I') node[midway] {$B_K$};

		%RECTANGLES FOR THE Wk
		\node[coordinate] (R) at (\Xa, 0){} ;
		\node[coordinate] (S) at (\Xs, 0){} ;
		\node[coordinate] (SS) at (\Xss, 0){} ;
		\node[coordinate] (T) at (\Xt, 0){} ;
		\node[coordinate] (TT) at (\Xtt, 0){} ;
		\node[coordinate] (U) at (\Xu, 0){} ;
		\node[coordinate] (V) at (\Xv, 0){} ;
		
		\node[coordinate] (R') at (\Xa, \Height){} ;
		\node[coordinate] (S') at (\Xs, \Height){} ;
		\node[coordinate] (SS') at (\Xss, \Height){} ;
		\node[coordinate] (T') at (\Xt, \Height){} ;
		\node[coordinate] (TT') at (\Xtt, \Height){} ;
		\node[coordinate] (U') at (\Xu, \Height){} ;
		\node[coordinate] (V') at (\Xv, \Height){} ;
		
		\draw (R) rectangle (S') node[midway] {$W_1$};
		\draw (S) rectangle (SS') node[midway] {$W_2$};
		\draw (SS) rectangle (T') node[midway] {$W_3$};
		\draw (T) rectangle (TT') node[midway] {$W_4$};
		\draw (TT) rectangle (U') node[midway] {$...$};
		\draw (U) rectangle (V') node[midway] {$W_m$};

		%PLOT THE ARROWS FOR THE NUMBER OF SAMPLES PER CLIENT
		\draw[>=latex,<->] ([yshift = \Offset cm]A') -- ([yshift = \Offset cm]AA') node[midway, above]{\footnotesize $q_1$};
        \draw[>=latex,<->] ([yshift = \Offset cm]AA')  -- ([yshift = \Offset cm]B') node[midway,above]{\footnotesize $q_2$};
        \draw[>=latex,<->] ([yshift = \Offset cm]B') -- ([yshift = \Offset cm]C') node[midway,above] {\footnotesize $q_3$};
        \draw[>=latex,<->] ([yshift = \Offset cm]C') -- ([yshift = \Offset cm]D') node[midway,above] {\footnotesize $q_4$};
        
        \draw[>=latex,<->] ([yshift = \Offset cm]DD') -- ([yshift = \Offset cm]E') node[midway, above]{\footnotesize $q_m$};
        \draw[>=latex,<->] ([yshift = \Offset cm]E')  -- ([yshift = \Offset cm]F') node[midway,above]{\footnotesize $q_{m+1}$};
        \draw[>=latex,<->] ([yshift = \Offset cm]F') -- ([yshift = \Offset cm]G') node[midway,above] {\footnotesize $q_{m+2}$};
        
        \draw[>=latex,<->] ([yshift = \Offset cm]H') -- ([yshift = \Offset cm]I') node[midway, above]{\footnotesize $q_K$};
        
        %PLOT THE ARROWS FOR THE NUMBER OF SAMPLES PER CLUSTER
        \node[coordinate] (M1) at (\Xa, -\Offset){} ;
        \node[coordinate] (M2) at (\Xs, -\Offset){} ;
		\node[coordinate] (M3) at (\Xb, -\Offset){} ;
		\node[coordinate] (M4) at (\Xss, -\Offset){} ;
		\node[coordinate] (M5) at (\Xss + 4 * \unit, -\Offset){} ;
		\node[coordinate] (M51) at (\Xss + 5 * \unit, -\Offset){} ;
		\node[coordinate] (M52) at (\Xt, -\Offset){} ;
		\node[coordinate] (M55) at (\Xt, -\Offset){} ;
		\node[coordinate] (M56) at (\Xt + 4 * \unit, -\Offset){} ;
		\node[coordinate] (M57) at (\Xtt, -\Offset){} ;
		\node[coordinate] (M6) at (\Xu, -\Offset){} ;
		\node[coordinate] (M7) at (\Xu + 3 * \unit, -\Offset){} ;
		\node[coordinate] (M71) at (\Xv - \unit, -\Offset){} ;
		\node[coordinate] (M8) at (\Xv , -\Offset){} ;
		
		\draw[>=latex,<->] (M1) -- (M2) node[midway,below]{\footnotesize $q_1$};
		\draw[>=latex,<->] (M2) -- (M3) node[midway,below]{\footnotesize $q_2$};
		\draw[>=latex,<->] (M3) -- (M4) node[midway,below]{\footnotesize $q_{m+1, 1}$};
		\draw[>=latex,<->] (M4) -- (M5) node[midway,below]{\footnotesize $q_3$};
		\draw[>=latex,<->] (M5) -- (M52) node[midway,below]{\footnotesize $q_{m+1, 2}$};
		\draw[>=latex,<->] (M55) -- (M56) node[midway,below]{\footnotesize $q_4$};
		\draw[>=latex,<->] (M56) -- (M57) node[midway,below]{\footnotesize $q_{m+2}$};
		\draw[>=latex,<->] (M6) -- (M7) node[midway,below]{\footnotesize $q_m$};
        \draw[>=latex,<->] (M71) -- (M8) node[midway,below]{\footnotesize $q_K$};

		\node[coordinate] (M1') at (\Xa, -2*\Offset){} ;
        \node[coordinate] (M2') at (\Xs, -2*\Offset){} ;
		\node[coordinate] (M4') at (\Xss, -2*\Offset){} ;
		\node[coordinate] (M5') at (\Xt, -2*\Offset){} ;
		\node[coordinate] (M55') at (\Xtt, -2*\Offset){} ;
		\node[coordinate] (M6') at (\Xu, -2*\Offset){} ;
		\node[coordinate] (M8') at (\Xv, -2*\Offset){} ;
		
		\draw[>=latex,<->] (M1') -- (M2') node[midway,below]{\footnotesize $M$};
		\draw[>=latex,<->] (M2') -- (M4') node[midway,below]{\footnotesize $M$};
		\draw[>=latex,<->] (M4') -- (M5') node[midway,below]{\footnotesize $M$};
		\draw[>=latex,<->] (M5') -- (M55') node[midway,below]{\footnotesize $M$};
		\draw[>=latex,<->] (M6') -- (M8') node[midway,below]{\footnotesize $M$};

        %PLOT THE TOTAL NUMBER OF SAMPLES FOR ALL THE CLIENTS
%         \node[coordinate] (begin_1) at (\Xa, 2*\Height+\Between + 2*\Offset){} ;
%         \node[coordinate] (end_1) at (\Xe, 2*\Height+\Between + 2*\Offset){} ;
% 		\node[coordinate] (end_2) at ( \Xj, 2*\Height + \Between + 2 * \Offset){} ;
		
% 		\draw[>=latex,<->] (begin_1) -- (end_1) node[midway,above]{\footnotesize Allocation a group $B_l$ to distribution $W_l$};
% 		\draw[>=latex,<->] (end_1) -- (end_2) node[midway,above]{\footnotesize Distributing remaining groupes};
		
		\node[coordinate] (begin) at (\Xa, 2*\Height+\Between + 2*\Offset){} ;
		\node[coordinate] (end) at ( \Xj, 2*\Height + \Between + 2 * \Offset){} ;
		
		\draw[>=latex,<->] (begin) -- (end) node[midway,above]{\footnotesize $\sum_{l=1}^K q_l = \sum_{l=1}^K \sum_{i\in B_l} r_{l, i}' = \sum_{i=1}^n m n_i = m M$};
		
		%PLOT THE ARROWS TO CONNECT THE CLIENTS
		\node[coordinate] (Q1) at (3 * \unit, \Height+ \Between){} ;
		\node[coordinate] (Q2) at (8.5 * \unit, \Height+ \Between){} ;
		\node[coordinate] (Q3) at (13.5 * \unit, \Height+ \Between){} ;
		\node[coordinate] (Q4) at (18.5 * \unit, \Height+ \Between){} ;
		\node[coordinate] (QM) at (26.5 * \unit, \Height+ \Between){} ;
		\node[coordinate] (QM1) at (29 * \unit, \Height+ \Between){} ;
		\node[coordinate] (QM2) at (31 * \unit, \Height+ \Between){} ;
		\node[coordinate] (QK) at ( 34.5 * \unit, \Height+ \Between){} ;
		
		\node[coordinate] (V1) at (3 * \unit, \Height){} ;
		\node[coordinate] (V2) at (9 * \unit, \Height){} ;
		\node[coordinate] (V3) at (15 * \unit, \Height){} ;
		\node[coordinate] (V4) at ( 21 * \unit, \Height){} ;
		\node[coordinate] (VM) at (32 * \unit, \Height){} ;
		
		\draw[>=latex,->] (Q1) -- (V1) node[midway,below]{};
		\draw[>=latex,->] (Q2) -- (V2) node[midway,below]{};
		\draw[>=latex,->] (Q3) -- (V3) node[midway,below]{};
		\draw[>=latex,->] (Q4) -- (V4) node[midway,below]{};
		\draw[>=latex,->] (QM) -- (VM) node[midway,below]{};
		
		\draw[>=latex,->] (QM1) -- (V2) node[midway,below]{};
		\draw[>=latex,->] (QM1) -- (V3) node[midway,below]{};
		\draw[>=latex,->] (QM2) -- (V4) node[midway,below]{};
		\draw[>=latex,->] (QK) -- (VM) node[midway,below]{};

\end{tikzpicture}
\end{center}
\caption{Illustration of the clients allocation scheme of Algorithm \ref{alg:improved_clus_sampling}. After the tree is split in $K$ groups of clients, the groups are ordered and we consider without loss of generality that their number of samples are inversely proportional to their index. With Algorithm \ref{alg:improved_clus_sampling}, the first $m$ groups, i.e. $B_1$ to $B_m$, are each associated to one distribution, i.e. $W_1$  to $W_m$. The remaining groups are considered one after the other and split among the remaining slots in the groups. Each distribution has $M$ samples from clients participating to the FL process.}\label{app:fig:alg_similarity}

\end{figure*}
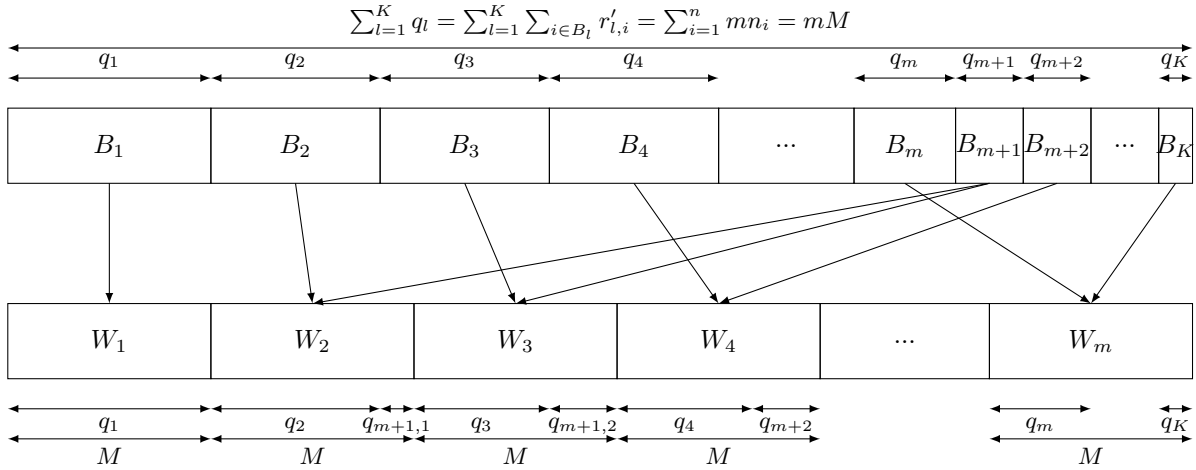

We illustrate in Figure \ref{app:fig:alg_similarity}, the clients allocation scheme of Algorithm \ref{alg:improved_clus_sampling} introduced in Section \ref{sec:clustered_improved} by considering how a client $i$ is associated to the $m$ distributions. Theorem \ref{theo:improved_clustered} states that Algorithm \ref{alg:improved_clus_sampling} provides a sampling scheme satisfying Proposition \ref{prop:equivalent} and takes time complexity $\Ocal(n^2d + X)$. We prove these statements in Section \ref{sec:clustered_improved} and the following proof.

\begin{proof}
    % Figure \ref{app:fig:alg_similarity} illustrates how the $m$ distributions are obtained with Algorithm \ref{alg:improved_clus_sampling} and, while the client allocation steps are different from Algorithm \ref{alg:clustered_basic}, equation (\ref{eq:sum_proba}) and (\ref{eq:total_probability}) are still satisfied by conservation of the total number of samples. Therefore, the $m$ distributions obtained with Algorithm \ref{alg:improved_clus_sampling} satisfy Proposition \ref{prop:equivalent}.
    
    With identical reasoning as for Algorithm \ref{alg:clustered_basic}, clients are allocated in complexity $\Ocal(n)$.
	Computing the similarity between two clients requires $d$ elementary operations, where $d$ is the number of parameters in the model, and has thus complexity $\Ocal(d)$. Computing the similarity matrix requires computing $\frac{n(n-1)}{2}$ client similarities and thus has total complexity $\Ocal(n^2 d)$. Computing the similarity tree depends on the \textit{clustering method} which we consider has complexity $\Ocal(X)$. Transforming the tree as discussed in Section \ref{sec:clustered_improved} requires going through its $n-1$ nodes and thus has time complexity $\Ocal(n)$. Cutting the tree requires considering at most every nodes and has thus complexity $\Ocal(n)$. Lastly, the tree is cut in at most $n$ branches and sorting them takes therefore complexity $\Ocal(n\log(n)$. Finally, combining all these time complexities gives for Algorithm \ref{alg:improved_clus_sampling} a time complexity of $\Ocal(n^2 d + X)$.

\end{proof}

% We note that Algorithm \ref{alg:improved_clus_sampling} is designed for sampling scenarios where every client satisfies $p_i\le \frac{1}{m}$. For federated datasets not satisfying this condition, i.e. $I = \{i: n_i \ge M/m \}\neq \emptyset$, it is enough to create for every client $i\in I$ $\floor{mp_i}$ distributions where only client $i$ is sampled and to obtain the remaining $m - \sum_{i\in I} \floor{mp_i}$ distributions with Algorithm \ref{alg:improved_clus_sampling} but with remaining number of samples data $n_i' = n_i - M\floor{m p_i}$ for the clients in $I$. Distributions with only one client are deterministic and remain identical thorough the learning process. 

In practice, the $m$ distributions are computed at every iteration, while the server is required to compute the similarity between sampled clients and all the other clients. Therefore the similarity matrix can be estimated in complexity $\Ocal(nmd)$, and Algorithm \ref{alg:improved_clus_sampling} has complexity $\Ocal(n m d + X)$.

\section{Additional experiments}\label{app:experiments}

We describe in Section \ref{sec:experiments} the different datasets used for the experiments and how we use the Dirichlet distribution to partition CIFAR10 in realistic heterogeneous federated datasets. In all the experiments, we consider a batch size of 50. For every CIFAR10 dataset partition, the learning rate is selected  in $\{0.001, 0.005, 0.01, 0.05, 0.1\}$ to minimize FedAvg with MD sampling training loss.

\subsection{CIFAR10 partitioning illustration}

In Figure \ref{app:fig:distribution_classes}, we show the influence of $\alpha$ on the resulting federated dataset heterogeneity. $\alpha=10$ provides almost an iid dataset and identical class percentages, column (a) , and same number of samples per class, column (b). With $\alpha = 0.001$, we get a very heterogeneous dataset with almost only one class per client translating into some classes much more represented than others due to the unbalanced nature aspect of the created federated dataset, cf Section \ref{sec:experiments}.
\begin{figure}
	\includegraphics[width=\linewidth]{./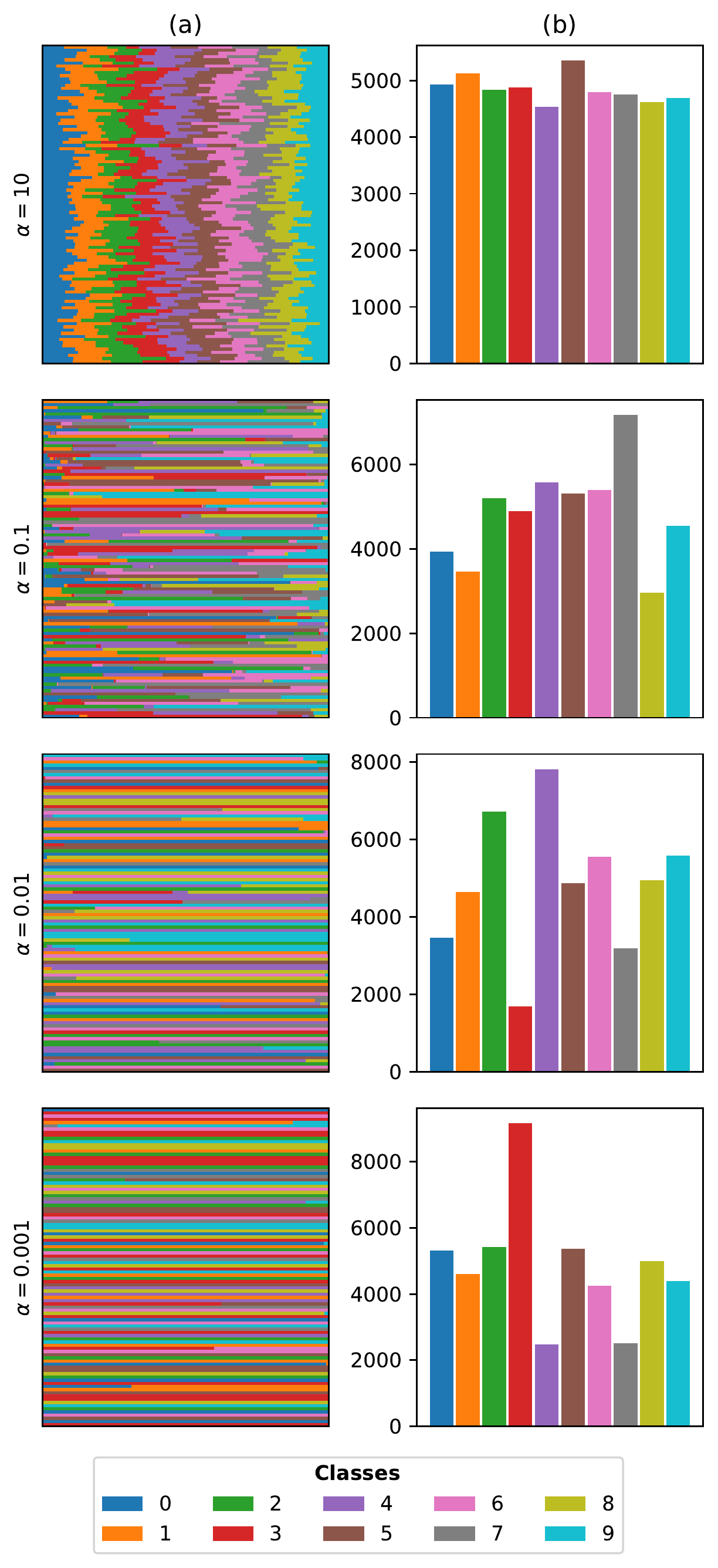}
	\caption{Effect of $\alpha$ on the resulting clients partitioning when using a Dirichlet distribution. Plots in column (a) represent the percentage of each class owned by the clients. Plots in column (b) give for every class its total number of samples across clients. We consider in this work $\alpha \in \{0.001, 0.01, 0.1, 10\}$. }
	\label{app:fig:distribution_classes}
\end{figure}

\subsection{Influence of the similarity measure}

Figure \ref{app:fig:similarity} shows the effect similarity measures (Arccos, L2, and L1) have on training global loss convergence. We retrieve that Algorithm \ref{alg:clustered_basic} outperforms MD sampling by reducing clients aggregation weight variance. We remind that the hierarchical tree is obtained using Ward's method in this work.
We notice that the tree similarity measures gives similar performances when using Algorithm \ref{alg:improved_clus_sampling} with Ward hierarchical clustering method.. This justifies the use of Arccos similarity for the other experiments. 

\begin{figure}
	\includegraphics[width=\linewidth]{./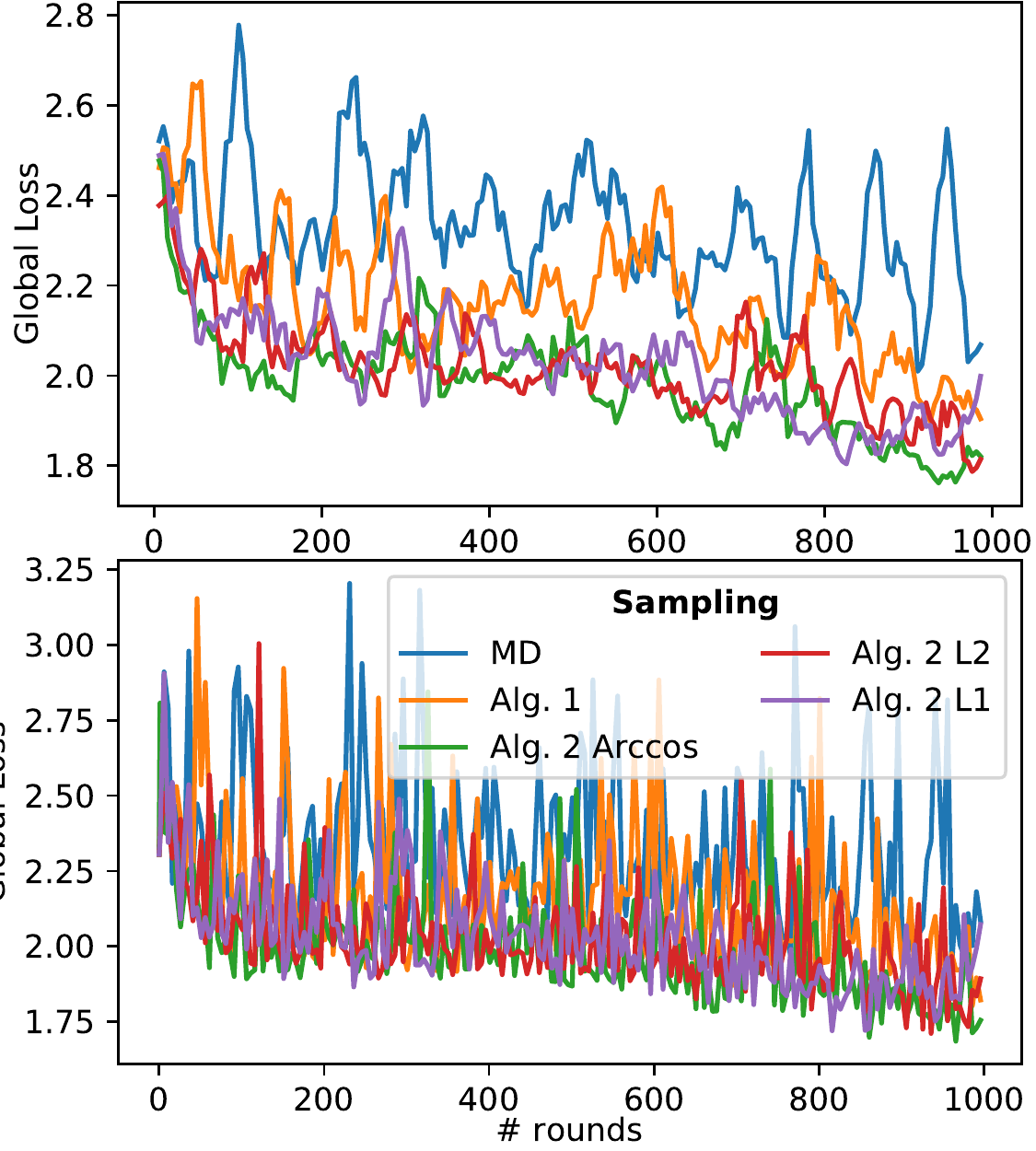}
	\caption{Effect of the similarity measure chosen for Algorithm \ref{alg:improved_clus_sampling} on the training loss convergence. We consider the evolution of the global loss, equation (\ref{eq:global_loss}), in function of the server iteration $t$. For clarity concerns, we plot the global loss obtained with rolling mean over 50 server iterations (top) and the raw global loss (bottom). We consider CIFAR partitioned with Dir($\alpha=0.01$), learning rate $lr=0.05$, $N=100$ SGD, and $m=10$ sampled clients.}
	\label{app:fig:similarity}
\end{figure}

\subsection{More details on Figure \ref{fig:CIFAR_unbalanced}}

For sake of clarity, we note that the training loss displayed in
Figures \ref{fig:CIFAR_unbalanced} is computed as the rolling mean over 50 iterations. In Figure \ref{app:fig:unbalanced_full}, we provide the raw training global loss with the testing accuracy at every server iteration.

\begin{figure}
	\includegraphics[width=\linewidth]{./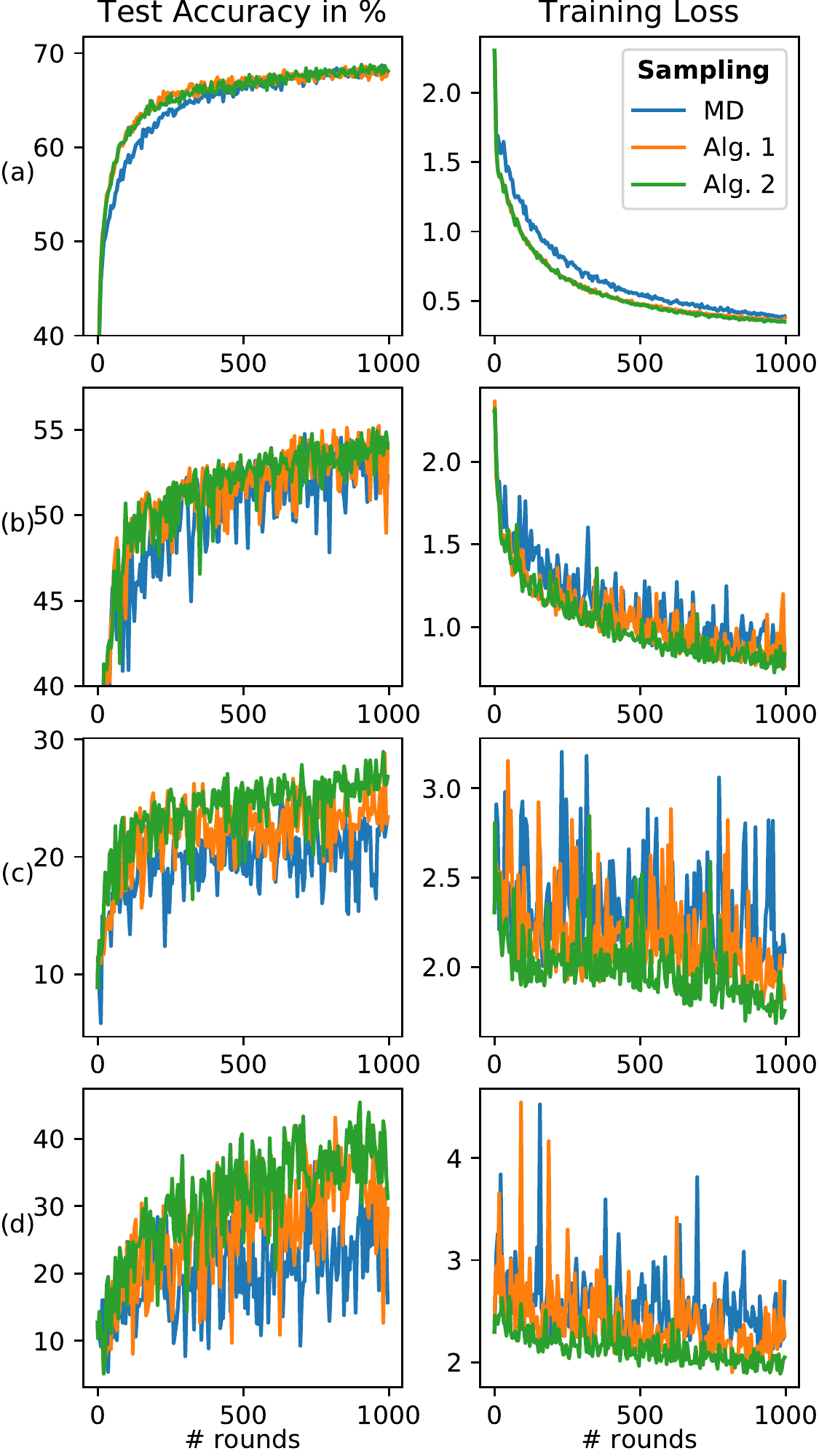}
	\caption{We investigate the improvement provided by clustered sampling on federated unbalanced datasets partitioned from CIFAR10 using a Dirichlet distribution with parameter $\alpha\in \{0.001, 0.01, 0.1, 10\}$ for respective row (a), (b), (c), (d). We use $N=100$, $m=10$, and respective learning rate for each dataset $lr=\{0.05, 0.05, 0.05, 0.1\}$.}
	\label{app:fig:unbalanced_full}
\end{figure}

\subsection{Influence of $m$ the number of sampled clients, and $N$ the number of SGD run}

We also investigates the influence the number of sampled clients $m$ and the number of SGD run $N$ have on the FL convergence speed and smoothness in Figure \ref{fig:effect_N_and_m}. We notice that the more important the amount of local work $N$ is, and the faster clustered sampling convergence speed is. With more local work, clients better fit their data. In non-iid dataset this translates in more forgetting on the classes and samples which are not part of the sampled clients. Regarding the amount of sampled clients $m$, we notice that with a smaller amount of sampled clients the improvement of clustered sampling over MD sampling is more important. We associate this result to the better data representativity of clustered sampling. For the same reason, when we increase the number of sampled clients, we see faster convergence for both MD and clustered sampling. The performance of clustered sampling is closer but still better than the one of MD sampling.

For sake of clarity, we note that the training loss displayed in Figures \ref{fig:effect_N_and_m} is computed as the rolling mean over 50 iterations. In Figure \ref{app:fig:unbalanced_full}, we provide the raw training global loss with the testing accuracy at every server iteration.

\begin{figure}
	\includegraphics[width=\linewidth]{./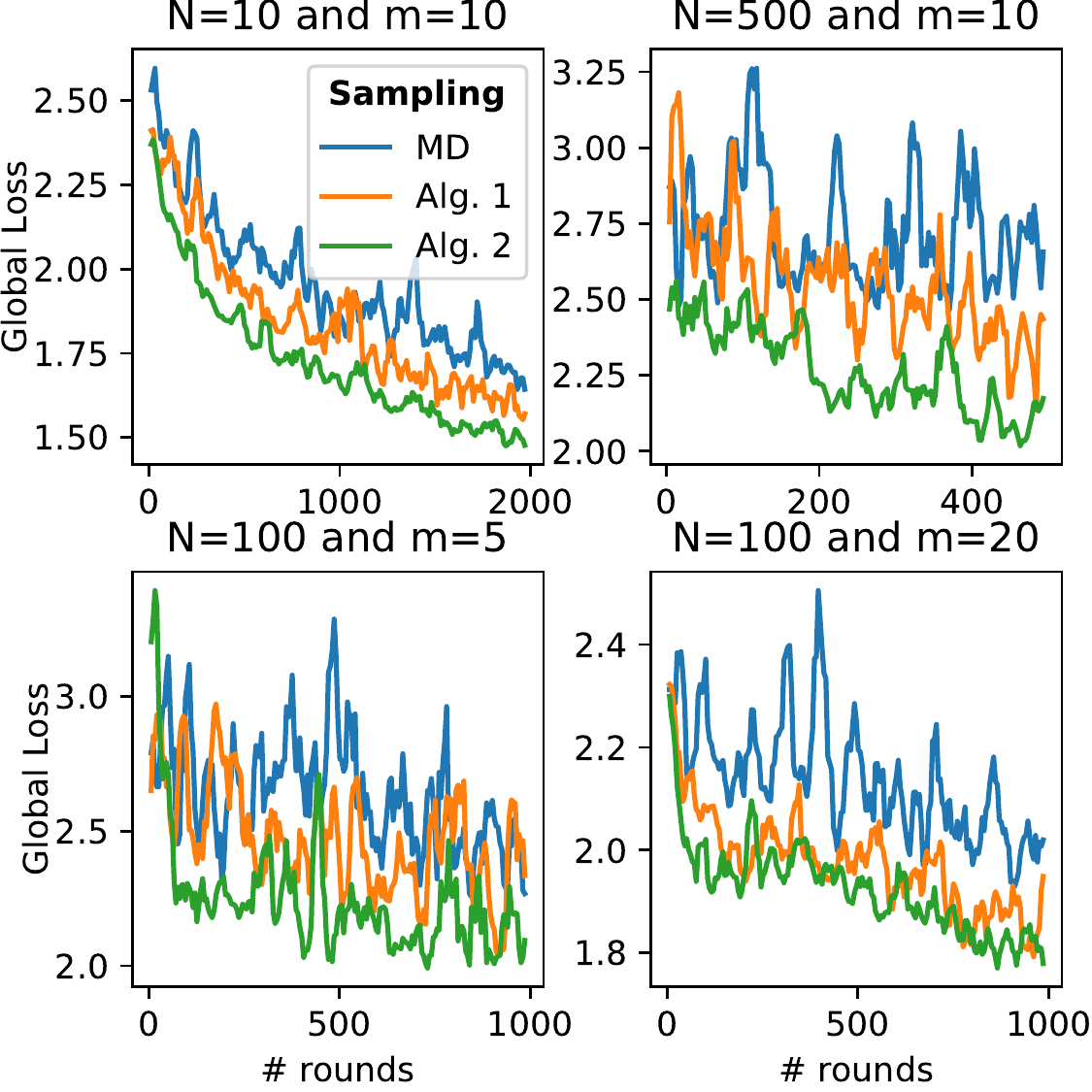}
	\caption{ We consider the federated dataset partitioned from CIFAR10 using a Dirichlet distribution with parameter $\alpha = 0.01$. We investigate the influence of $N$, the number of SGD run by each client, and $m$, the  number of sampled clients, on the training loss convergence. For each plot, experiments in first row use respectively $lr=\{0.1, 0.05\}$ and for second row $lr=\{0.05, 0.05\}$.}
	\label{fig:effect_N_and_m}
\end{figure}

\begin{figure}
	\includegraphics[width=\linewidth]{./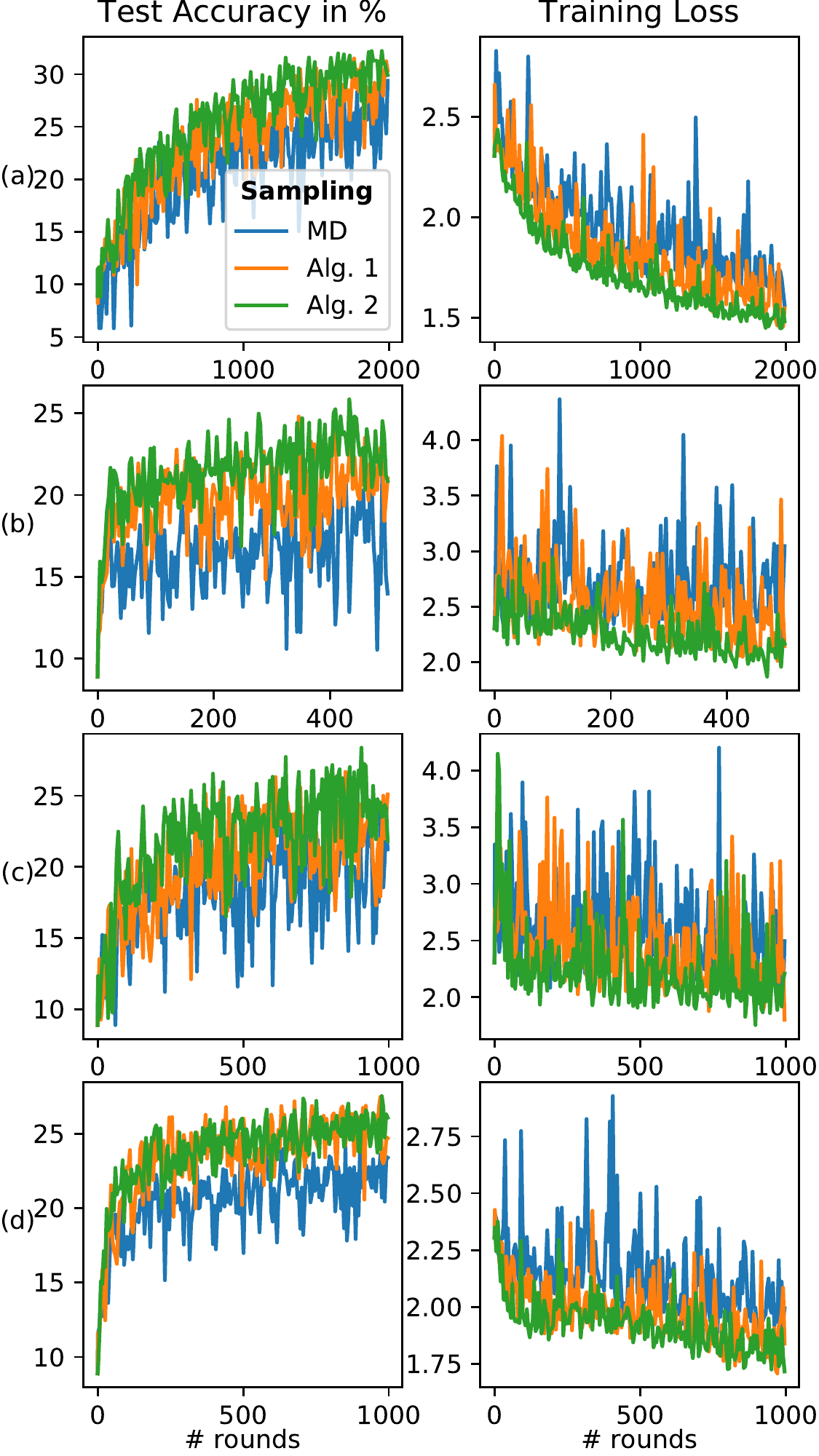}
	\caption{We consider the federated dataset partitioned from CIFAR10 using a Dirichlet distribution with parameter $\alpha = 0.01$. We investigate the influence of $N$ the number of SGD run by each client in the first two rows with $N=10$ and $N=500$ for $m=10$ and the influence of sampled clients with $m=5$ and $m=20$ for $N=100$ in the last two rows. For each dataset, we use respective learning rate $lr=\{0.1, 0.05, 0.05, 0.05\}$.}
	\label{app:fig:effects_N_and_m_full}
\end{figure}

\subsection{Local regularization}
With FedProx \cite{FedProx}, every client's local loss function is equipped with a regularization term forcing the updated model to stay close to the current global model, i.e.
\begin{equation}
    \Lcal_i'(\theta_i^{t+1})
    = \Lcal_i(\theta_i^{t+1}) + \frac{\mu}{2}  \norm{\theta_i^{t+1} - \theta^t}^2
\end{equation}
where $\theta_{t+1}$ is the updated local model of client $i$ and $\theta^t$ is the current global model. $\mu$ is the hyperparameter monitoring the regularization and is common for all the clients. This framework enables smoother federated learning processes.

We try a range of regularization term $\mu \in \{0.001, 0.01, 0.1, 1.\}$ and keep $\mu = 0.1$ maximizing the performances of FedAvg with regularization and MD sampling. We notice in Figure \ref{app:fig:CIFAR_regularization} that Algorithm \ref{alg:clustered_basic} and \ref{alg:improved_clus_sampling} still outperform MD sampling.

\begin{figure}
	\includegraphics[width=\linewidth]{./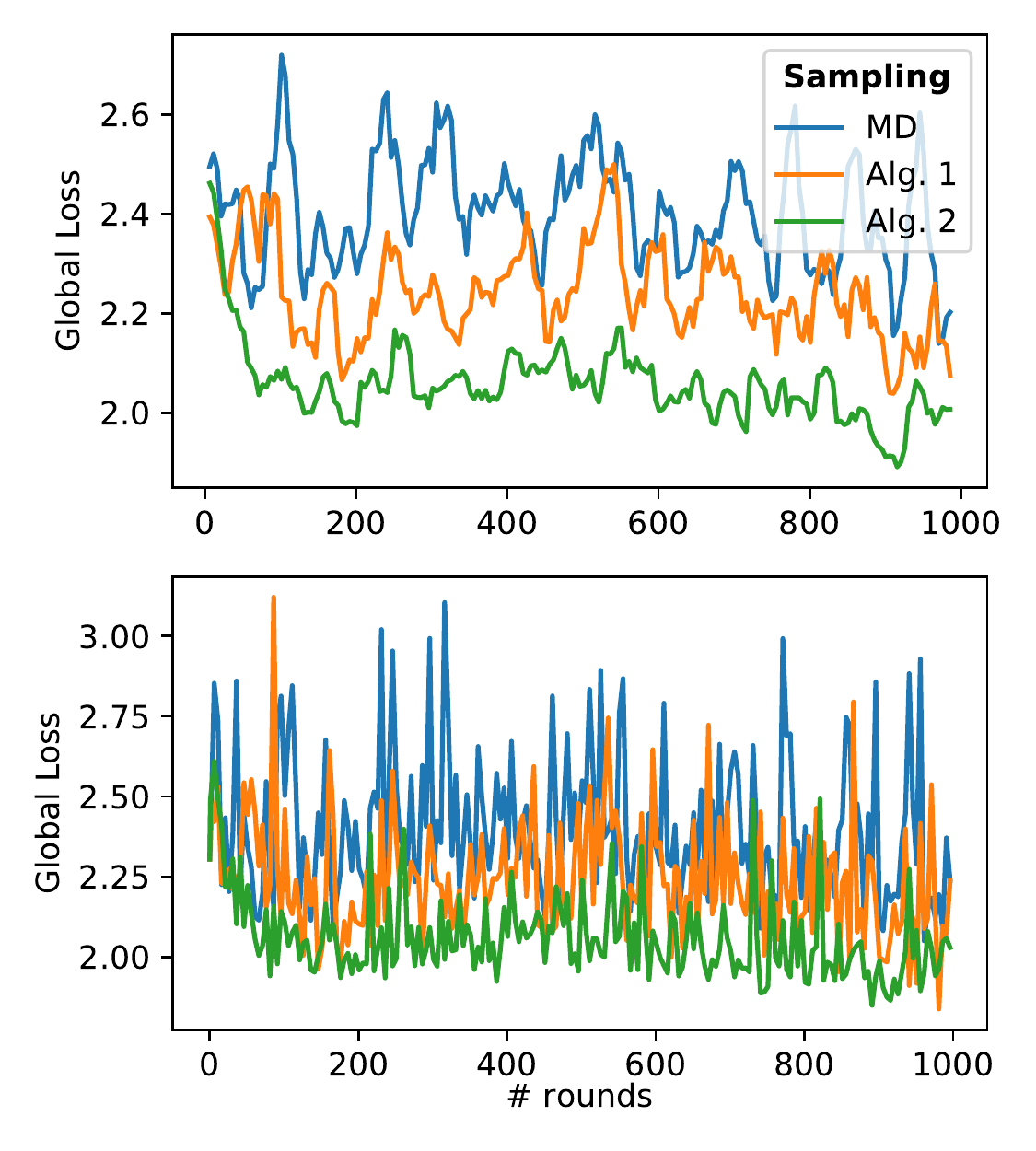}
	\caption{Training loss convergence for FL with FedProx local loss function regularization ($\mu=0.1$). We consider CIFAR10 partitioned with Dir($\alpha = 0.01$), learning rate $lr=0.05$, $m=10$ sampled clients, and $N=100$ SGD.}
	\label{app:fig:CIFAR_regularization}
\end{figure}

\end{document}